%% file: ICLR_2025_Template/main.tex
\documentclass{article} 
\usepackage{iclr2025_conference,times}
\iclrfinalcopy

\input{math_commands.tex}

\usepackage{tcolorbox}
\usepackage{hyperref}
\usepackage{url}
\usepackage{algorithm}
\usepackage[noend]{algpseudocode}
\usepackage{graphicx}
\usepackage[normalem]{ulem}
\usepackage{array}
\usepackage{booktabs}

\usepackage[dvipsnames]{xcolor}      
\usepackage{colortbl}   
\usepackage{mdframed}

\usepackage{subcaption}
\usepackage{enumitem}
\usepackage{multirow}
\usepackage{capt-of}
\usepackage{wrapfig}
\usepackage{soul}

\usepackage{bm}
\usepackage{arydshln}
\usepackage{placeins}

\usepackage{mathtools}
\usepackage[utf8]{inputenc} 
\usepackage[T1]{fontenc}    
\usepackage{url}        
\usepackage{subcaption}
\usepackage{amsfonts}   
\usepackage{cancel}
\usepackage{nicefrac}   

\usepackage{makecell}
\usepackage{hhline}
\usepackage{rotating}

\usepackage{adjustbox}
\usepackage{amsthm}  
\usepackage{amsmath} 
\usepackage{amssymb} 
\usepackage{tikz}
\theoremstyle{plain}
\newtheorem{theorem}{Theorem}[section]

\theoremstyle{definition}
\newtheorem{definition}{Definition}
\newtheorem{assumption}{Assumption}
\newtheorem{proposition}{Proposition}
\theoremstyle{remark}
\newtheorem{remark}{Remark}

\definecolor{gg}{gray}{0.92}
\newcolumntype{a}{>{\columncolor{gg}}c}
\newif\ifcomments
\commentstrue

\ifcomments
\newcommand{\authorcomment}[2]{\tikz[baseline=(X.base)]\node [draw=#1,fill=#1!40,semithick,rectangle,inner sep=2pt, rounded corners=3pt] (X) {#2};}

\else
\newcommand{\authorcomment}[2]{}

\fi

\tcbset{
  simplebox/.style={
    colframe=black,  
    colback=white,   
    boxrule=0.8pt,   
    arc=0pt,         
    left=2mm,
    right=2mm,
    top=1mm,
    bottom=1mm
  }
}

\title{\textsc{Harpoon}: Generalised Manifold Guidance for Conditional Tabular Diffusion}

\author{Aditya Shankar\textsuperscript{1} Yuandou Wang\textsuperscript{1} Rihan Hai\textsuperscript{1} Lydia Chen\textsuperscript{1,2}\\
\textsuperscript{1}Department of Computer Science, Delft University of Technology\\
\textsuperscript{2}Department of Computer Science, Université de Neuchâtel\\
\texttt{\{a.shankar,y.wang-45,r.hai\}@tudelft.nl} \\
\texttt{lydiaychen@ieee.org}
}

\newmdenv[linecolor=white,backgroundcolor=mygray]{myframe}
 \definecolor{mygray}{gray}{0.95}

\begin{document}

\maketitle

\begin{abstract}


Generating tabular data under conditions is critical to applications requiring precise control over the generative process. Existing methods rely on training-time strategies that do not generalise to unseen constraints during inference, and struggle to handle conditional tasks beyond tabular imputation. While manifold theory offers a principled way to guide generation, current formulations are tied to specific inference-time objectives and are limited to continuous domains. We extend manifold theory to tabular data and expand its scope to handle diverse inference-time objectives. On this foundation, we introduce \textsc{Harpoon}, a  tabular diffusion method that guides unconstrained samples along the manifold geometry to satisfy diverse tabular conditions at inference. We validate our theoretical contributions empirically on tasks such as imputation and enforcing inequality constraints, demonstrating \textsc{Harpoon's} strong performance across diverse datasets and the practical benefits of manifold-aware guidance for tabular data. Code URL: \url{https://github.com/adis98/Harpoon}\\

\end{abstract}

\input{ICLR_2025_Template/chapters/intro}
\input{ICLR_2025_Template/chapters/background}

\input{ICLR_2025_Template/chapters/method}
\input{ICLR_2025_Template/chapters/results}

\input{ICLR_2025_Template/chapters/conclusions}

\clearpage
\textbf{Acknowledgements.} This research is partly funded by Priv-GSyn project, 200021E\_229204 of Swiss National Science Foundation,  the DEPMAT project, P20-22 / N21022, of the research programme Perspectief of the Dutch Research Council, and ASM international NV. This publication was also supported by the Dutch Research Council (VI.Veni.222.439), the Smart Networks and Services Joint Undertaking (SNS JU) under the European Union’s Horizon Europe Research and Innovation programme (Grant Agreement No. 101192750)

\textbf{Reproducibility Statement.} We provide full implementation details, including model architectures, training hyperparameters, and evaluation protocols in the Appendix. All experiments are run with fixed random seeds and repeated across five independent trials to report mean and standard deviation. Public datasets are used for all experiments, and we open-source our anonymised code: \url{https://github.com/adis98/Harpoon}. We additionally also provide the full proofs for our theoretical contributions in the Appendix.

\textbf{Ethics Statement.} This work advances theoretical and methodological foundations for conditional tabular generation. By enabling flexible inference-time guidance, our framework can support applications where respecting domain constraints is essential, such as enforcing physical laws in scientific simulations, adhering to medical ranges in healthcare analytics, or satisfying financial regulations in risk modelling. All experiments are conducted on standard, publicly available benchmark datasets, and our approach is intended to be a building block for reliable, constraint-aware generative modelling in practical domains. Regarding AI use, Large language models were used to assist us with writing and editing, but all the ideas, designs, and analyses are our own.

\bibliography{ref}
\bibliographystyle{iclr2025_conference}
\clearpage
\appendix
\input{ICLR_2025_Template/appendix/A1_symbols}
\input{ICLR_2025_Template/appendix/proofs}

\input{ICLR_2025_Template/appendix/A3_datasets}
\input{ICLR_2025_Template/appendix/appendix-tabulartasks}
\input{ICLR_2025_Template/appendix/appendix_tasks}
\input{ICLR_2025_Template/appendix/appendix-metrics}
\input{ICLR_2025_Template/appendix/appendix-baselines}

\input{ICLR_2025_Template/appendix/appendix-algos}

\input{ICLR_2025_Template/appendix/appendix-experiments}

\end{document}

%% file: math_commands.tex
\usepackage{amsmath,amsfonts,bm}

\def\eqref#1{equation~\ref{#1}}

\def\1{\bm{1}}

\DeclareMathAlphabet{\mathsfit}{\encodingdefault}{\sfdefault}{m}{sl}
\SetMathAlphabet{\mathsfit}{bold}{\encodingdefault}{\sfdefault}{bx}{n}



%% file: ICLR_2025_Template/chapters/intro.tex
\section{Introduction}
Generating tabular data subject to \textit{conditions} (constraints) is critical to many tasks, such as \textit{imputing} missing values given partial observations~\citep{zhang2025diffputer} or simulating hypothetical ``\textit{what-if}'' scenarios for decision-making~\citep{narsimhan24}. Such tasks require strong generative models, among which \textit{diffusion} models~\citep{ho2020denoising,songscore} have emerged as state-of-the-art (SOTA) across many modalities~\citep{dhariwal2021diffusion,kotelnikov2023tabddpm,kollovieh2023predict}. 
However, \emph{conditional} tabular diffusion models~\citep{zheng2022,ouyang2023missdiff,villaizan2025diffusion,zhang2025diffputer} often rely on training-time strategies that cannot generalise to unseen conditions at inference, or cannot efficiently handle diverse objectives, such as inequality constraints on features (e.g. \texttt{Age} >= 10)~\citep{stoian2024}.

Recent works in image diffusion offer a geometric view of  the diffusion process~\citep{chung2022improving,he2024manifold}, treating data as lying on a low-dimensional \textit{manifold}~\citep{gorban2018blessing,bengio2013representation,pope21}. They show that naively enforcing conditions can push generated samples off the manifold, producing unrealistic results. These works propose constraining samples to stay close to the manifold's surface until converging to the desired conditions. However, these theories are limited to continuous domains and assume \textit{flat} geometries, which may not hold for mixed-type tabular data. They also only analyse specific inference-time losses (e.g., squared-error) without generalising to diverse objectives, such as inequality constraints.

{We extend manifold theory to tabular diffusion models operating in the {data space}}. We first prove that  diffusion models {implicitly} learn an \textit{orthogonal} mapping onto the underlying manifold when trained using the standard squared-error loss, even without strong assumptions on the curvature. We then prove that the gradient of \textit{any} differentiable inference-time loss lies in the \textit{tangent space} of the  manifold. With this powerful result, we introduce \textsc{Harpoon}, a conditional tabular diffusion method that requires training \textit{once} and adapts to diverse conditions at \textit{inference}. \textsc{Harpoon} interleaves {tangential} gradient corrections with {unconstrained} denoising steps, gradually {guiding} samples along the manifold's surface towards regions satisfying diverse inference-time objectives, such as imputation or inequality constraints on features. We summarise our contributions as follows:\\ 
\textbf{Theory.} We provide the \emph{first theoretical results} linking manifolds to tabular diffusion, including {curved} geometries. We prove that diffusion models act as orthogonal projectors, guaranteeing the tangent-space behaviour of gradients from \emph{any} differentiable inference-time objective.\\
\textbf{Algorithm.} Building on our theoretical results, we design a manifold-aware conditional tabular diffusion method, \textsc{Harpoon}, that guides unconstrained samples towards satisfying diverse inference-time tabular generative constraints.
\\
\textbf{Empirical validation.} We empirically validate the geometric alignment of the inference-time gradients with the underlying manifold, demonstrating \textsc{Harpoon's} strong performance across diverse datasets and conditional tasks, including imputation and inequality conditions. 

%% file: ICLR_2025_Template/chapters/background.tex
\vspace{-3mm}
\section{Background and Related work}
\label{sec:background}
We review diffusion models for tabular data and their conditional extensions. We then turn towards manifold-based perspectives from image diffusion to motivate our contributions. The key notations used from this section onwards are summarised in \autoref{app:notations}.

\textbf{Tabular Diffusion.} Denoising Diffusion Probabilistic Models (DDPMs)~\citep{ho2020denoising} define a generative procedure by reversing a Markovian noising process, 
\(
p(x_1, \dots, x_T | x_0)=\prod_{t=1}^T p(x_t| x_{t-1})
\)\footnote{We use \(p\) for the forward process; \(p_\theta\) denotes the learned denoising process parameterized by \(\theta\).}, 
which progressively corrupts a clean sample \(x_0\) over $T$ steps. For tabular data containing a mix of discrete and continuous features, $x_0$ is represented as
\(\big[x_0^{\mathrm{cont}}, x_0^{(1)}, \dots, x_0^{(k)}\big],
\) 
with continuous features \(x_0^{\mathrm{cont}}\) and categorical vectors \(x_0^{(i)}\) that can be represented using one-hot, integer, or binary encoding schemes. The standard approach for the \emph{forward} (noising) process across data modalities is to add continuous Gaussian noise to samples, as follows: 
\begin{equation}
\label{eq:fwdnoising}
    x_t = \sqrt{\bar{\alpha}_t}\, x_0 + \sqrt{1-\bar{\alpha}_t}\, \epsilon, 
    \quad \epsilon \sim \mathcal{N}(0,I),
\end{equation}
where $\bar{\alpha}_t = \Pi_{s=1}^t \alpha_s$ is a cumulative noise schedule, with $\bar{\alpha}_t \rightarrow0$ as $t \rightarrow T$ \citep{ho2020denoising}. The resulting noisy  distribution is $p(x_t) := \int p(x_t|x_0) p(x_0) dx_0$ with $p(x_t|x_0) \sim \mathcal{N}(\sqrt{\bar{\alpha}_t} x_0, (1-\bar{\alpha}_t) I)$.

The \emph{backward} process iteratively reconstructs $x_0$ from $x_T$ using a neural network (a.k.a \emph{denoiser}) with parameters $\theta$ that predicts $p_\theta(x_{t-1}| x_t)$ at each step by estimating the added noise $\epsilon$. The standard training loss is the Mean-Squared-Error (MSE) \citep{ho2020denoising} using estimates $\epsilon_{\theta}(x_t,t)$:  
\begin{equation}
    \label{eq:denoiseobj}
    \mathcal{L}_{\mathrm{train}} = 
    \mathbb{E}_{x_0 \sim p(x_0),\, t \sim {U}[1,T],\, \epsilon \sim \mathcal{N}(0,I)} 
    \Big[ \big\| \epsilon - \epsilon_\theta(x_t, t) \big\|_2^2 \Big],
\end{equation}
For tabular data, the general standard is to model continuous and discrete variables using separate diffusion processes, such as \emph{masked} \citep{shi2024tabdiff,austin2021structured} or \emph{multinomial} \citep{kotelnikov2023tabddpm,hoogeboom2021argmax} diffusion for categorical features. However, these approaches introduce additional training complexity, requiring separate loss functions, such as MSE and cross-entropy for continuous and discrete features, respectively. {Another alternative is to stick with a single continuous diffusion process and squared-error objective, but instead apply it on smooth latent space embeddings of samples using autoencoders~\citep{zhang2024mixedtype,10597707}.}

\textbf{Conditional Tabular Diffusion.} Conditional generation introduces a variable $c$ to guide the generation process. We use $c$ to broadly denote any user-specified constraint, ranging from binary masks that indicate partially observed entries in imputation tasks~\citep{xu2019modeling,alcaraz2023} to inequality constraints on features~\citep{stoian2024}.
A key challenge is that these conditions often correspond to \emph{diverse} and \emph{non-linear} objectives~\citep{donti2021}. As a result, enforcing them at inference scales poorly, since existing approaches typically require separate retraining or fine-tuning of the generative model for each new condition.

Conditional tabular diffusion methods typically fall into two categories: \textit{training-time} and \textit{inference-time} solutions. Training-time methods learn the conditional distribution $p_\theta(x_{t-1}| x_t, c)$ during training, using $c$ as model inputs~\citep{zheng2022,villaizan2025diffusion,ouyang2023missdiff}. However, this limits generalisation to unseen conditions at inference.  {\emph{Classifier guidance}~\citep{dhariwal2021diffusion}, trains an auxiliary classifier to predict the gradient $\nabla_{x_t}\log p(c|x_t)$, to steer sampling~\citep{liu2024controllable}. While this is effective when $c$ denotes discrete labels, it is not applicable to real-valued constraints. Moreover, the pre-trained classifiers cannot guide samples towards conditions unseen during training. \textit{Classifier-free guidance}~\citep{ho2021classifierfree} avoids the need for separate classifiers by repurposing the diffusion backbone. It jointly trains and interpolates between a conditional and unconditional model to control the degree of guidance. However, it is still limited by the conditional branch, which requires training on constraints known in advance, affecting inference-time generalisation.} 

In contrast, inference-time methods for imputation~\citep{zhang2025diffputer} guide generation by forward noising observed values and jointly denoising observed and missing values, similar to image inpainting~\citep{lugmayr2022repaint}. While effective for imputation, they require partially observed ground truths and cannot handle inequality constraints. Rejection sampling~\citep{casella2004generalized} is a general fallback, but is impractical due to high violation rates. \cite{scassola2025zero} introduces training-free guidance for logical constraints in tabular diffusion, but lacks a geometric justification for the guidance mechanism or theoretical guarantees for arbitrary inference objectives.

{Finally, we note that conditional generative methods typically operate in the \emph{data} space, where constraints are specified directly on features. Conditioning latent methods is challenging because it requires translating data space constraints into their latent counterparts, which is only possible when there is a deterministic mapping between the two spaces \citep{zhang2024mixedtype}. }

\begin{wrapfigure}{r}{0.31\textwidth}
    \centering
\includegraphics[width=0.31\textwidth]{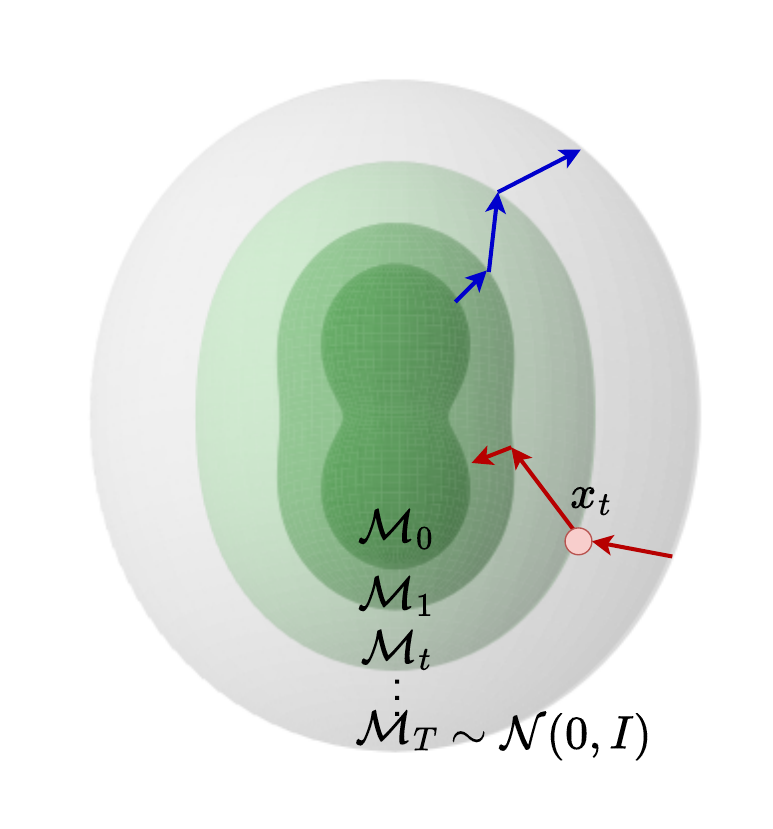}
    \caption{Geometry of forward ({$\uparrow$}) and backward (\textcolor{red}{$\uparrow$}) diffusion.} \label{fig:manifoldtransition}
\end{wrapfigure}
\textbf{The Geometry of (Image) Diffusion.} Recent works in image diffusion interpret the diffusion process as a sequence of \textit{manifold transitions}
~\citep{chung2022improving,he2024manifold}. A clean sample $x_0$ is assumed to lie on a \emph{low-dimensional} manifold $\mathcal{M}_0 \subset \mathbb{R}^d$. The forward  process gradually perturbs $\mathcal{M}_0$ into a family of surrounding \emph{shells} $\{\mathcal{M}_t\}_{t=1}^{T}$, where $\mathcal{M}_T$ approximates pure Gaussian noise (see \autoref{fig:manifoldtransition}). Noisy samples concentrate near these shells, which the model learns to denoise. Enforcing constraints by pushing samples orthogonally toward $\mathcal{M}_0$ risks ``skipping'' shells, yielding off-distribution inputs where model estimates become unreliable \citep{chung2022improving}. Crucially, ~\cite{chung2022improving} show that the gradients of inference-time squared-error objectives lie in the \emph{tangent} space of $\mathcal{M}_0$, and leverage these tangential corrections to help keep samples within the current shell while steering them \textit{along} the manifold geometry to enable condition-aware denoising.

Extending these insights to tabular diffusion is not trivial. 
First, existing theories assume continuous features, while tabular models have both continuous and discrete features ~\citep{hoogeboom2021argmax,kotelnikov2023tabddpm}, thereby violating these assumptions. Second, current theoretical proofs implicitly assume \textit{flat} manifold geometries~\citep{chung2022improving,he2024manifold}, which need not hold for mixed-type tabular data. Third, the tangent-space guarantees only hold for \textit{squared-error} losses (Theorem 1,~\cite{chung2022improving}). In contrast, tabular conditions, such as inequalities or imputation masks, need \emph{diverse}, often \emph{non-linear} differentiable losses (see eqs.~\ref{eq:imputationloss},~\ref{eq:inequality}, and \ref{eq:disjunc}, \autoref{app:taskdefs}), which is beyond the current theoretical scope.

%% file: ICLR_2025_Template/chapters/method.tex
\vspace{-5mm}
\section{Manifold Guidance for Tabular Diffusion}
\label{sec:theory}
We first derive the theoretical foundations for \textsc{Harpoon} and then detail its algorithm for conditional tabular generation. We define the necessary notations and assumptions, leading up to our main theoretical contribution: a manifold guidance technique for \emph{any differentiable inference-time loss}.  

\subsection{Preliminaries and Assumptions}
\label{sec:prelims}
We represent a tabular sample $x_0 \in \mathbb{R}^d$ as a concatenation of continuous and one-hot encoded categorical features, same as Sec.~\ref{sec:background}. We assume that the support of the data distribution $p(x_0)$ lies on a continuous smooth manifold $\mathcal{M}_0 \subset \mathbb{R}^d$ (see Remark~\ref{rem:tabularrep}). Here, $d$ denotes the ambient space dimension, and $n = \dim(\mathcal{M}_0)$ is the intrinsic dimension of the data manifold, with $n \ll d$. We formalise this with standard geometric assumptions~\citep{chung2022improving}.   
\begin{assumption}[Smooth Embedding]
\label{assump:manifold}
\textit{The support of the clean data distribution $p(x_0)$ lies on a smooth, connected, $n$-dimensional manifold 
$\mathcal{M}_0$ embedded in the ambient space $\mathbb{R}^d$, where $n \ll d$.  
Formally, there exists a smooth embedding $\varphi: \mathbb{R}^n \to \mathbb{R}^d$ such that 
$\mathcal{M}_0 = \varphi(\mathbb{R}^n)$.}
\end{assumption}
\begin{assumption}[Local Linearity]
\label{assump:locallinear}
\textit{For any point $x_0 \in \mathcal{M}_0$, the manifold is locally well-approximated by its tangent space 
$T_{x_0}\mathcal{M}_0$  at $x_0$.  
That is, there exists a radius $r > 0$ such that within the hypersphere
\(
B_r(x_0) = \{ x \in \mathbb{R}^d : \|x - x_0\|_2 < r \},
\)
the manifold coincides with its tangent plane:
\[
    \mathcal{M}_0 \cap B_r(x_0) = T_{x_0}\mathcal{M}_0 \cap B_r(x_0).
\]
Intuitively, sufficiently zooming in around $x_0$ makes the manifold appear locally flat.}
\end{assumption}
\begin{myframe}[innerleftmargin = 5pt]
\begin{remark}
\label{rem:tabularrep}
[\textbf{Tabular representation}] A sample $x_0 \in \mathbb{R}^d$ consists of continuous and categorical variables in the tabular setting.  
To extend Assumptions~\ref{assump:manifold} and~\ref{assump:locallinear} to tabular data, we assume the support of the encoded points resides in a continuous space. For example, a discrete feature with $K$ classes can be encoded as a one-hot vector, or as a ``soft'' probability distribution over the $K$ classes.  
This relaxes categories into points within the ($K$-1)-simplex, giving a continuous embedding. While such a continuous approximation may be suboptimal, we later show that our theoretical derivations provide a way to overcome this limitation for tabular diffusion.
\end{remark}
\end{myframe}

Building on these assumptions, we connect the forward process to the geometry of noisy manifolds.
From eq.~\ref{eq:fwdnoising}, , each $x_t$ that is the outcome of the forward process can be viewed as lying near a scaled copy of the clean data manifold, perturbed by Gaussian noise (see \autoref{fig:manifoldtransition}). We formalise this intuition in Proposition~\ref{prop:noisy}, formally proved in~\cite{he2024manifold}.
\begin{proposition}[Shell Structure]
\label{prop:noisy}
\textit{During the forward diffusion process, noisy samples $x_t$ at diffusion step $t$ probabilistically concentrate on a $(d-1)$-dimensional manifold $\mathcal{M}_t$ forming a shell around a scaled copy of the $n$-dimensional data manifold $\mathcal{M}_0$:
\[
\mathcal{M}_t := \{ x \in \mathbb{R}^d \mid d(x, \sqrt{\bar{\alpha}_t}\mathcal{M}_0) = \sqrt{(1-\bar{\alpha}_t)(d-n)} \}.
\]
}
\end{proposition}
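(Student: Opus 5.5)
The plan is to fix a clean point $x_0 \in \mathcal{M}_0$ and split the Gaussian noise into components tangent and normal to the scaled manifold at $y_0 := \sqrt{\bar{\alpha}_t}\,x_0$. Assumption~\ref{assump:locallinear} makes this split meaningful, since near $x_0$ the manifold coincides with $T_{x_0}\mathcal{M}_0$; scaling by $\sqrt{\bar{\alpha}_t}$ preserves this, with radius $\sqrt{\bar{\alpha}_t}\,r$. Let $P$ be the orthogonal projector onto the $n$-dimensional tangent space $T_{x_0}\mathcal{M}_0$, and $P^\perp = I - P$ the projector onto its $(d-n)$-dimensional normal complement. From eq.~\ref{eq:fwdnoising},
\[
x_t - y_0 = \sigma_t \epsilon, \qquad \sigma_t := \sqrt{1-\bar{\alpha}_t},
\]
so $x_t - y_0 = \sigma_t P\epsilon + \sigma_t P^\perp \epsilon$. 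By rotational invariance of $\mathcal{N}(0,I)$, $P^\perp\epsilon$ is a standard Gaussian in $d-n$ dimensions, independent of $P\epsilon$.

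The first key step is to identify the distance to the scaled manifold with the normal component. If $x_t$ stays inside the flat patch, the nearest point of $\sqrt{\bar{\alpha}_t}\mathcal{M}_0$ to $x_t$ is $y_0 + \sigma_t P\epsilon$, which lies on the tangent plane and hence on the manifold. This gives
\[
d\big(x_t, \sqrt{\bar{\alpha}_t}\mathcal{M}_0\big) = \sigma_t \|P^\perp \epsilon\|_2 .
\]
The second step is measure concentration for the norm of a $k$-dimensional standard Gaussian with $k = d-n$. Since $\|\cdot\|_2$ is $1$-Lipschitz, Gaussian Lipschitz concentration (or Laurent--Massart for the $\chi^2_k$ variable) gives
\[
\Pr\big(\big|\|P^\perp\epsilon\| - \sqrt{k}\big| > s\big) \le 2e^{-s^2/2}.
\]
Also $\mathbb{E}\|P^\perp\epsilon\| = \sqrt{k} + O(k^{-1/2})$. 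Because $k = d-n$ is large when $n \ll d$, the relative deviation $s/\sqrt{k}$ vanishes. Hence, with probability at least $1 - 2e^{-s^2/2}$,
\[
d\big(x_t, \sqrt{\bar{\alpha}_t}\mathcal{M}_0\big) = \sqrt{(1-\bar{\alpha}_t)(d-n)}\,\big(1 + O(s/\sqrt{d-n})\big),
\]
which is the sense in which the samples concentrate on $\mathcal{M}_t$. Finally, $\mathcal{M}_t$ is a level set of a distance function to an $n$-dimensional submanifold at a regular value, so it is a $(d-1)$-dimensional hypersurface: a tube, or shell, around the scaled manifold. Integrating over $x_0 \sim p(x_0)$ transfers the statement from the conditional law to the marginal $p(x_t)$.

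The main obstacle is the first step, which needs the nearest-point identity to hold. The tangential displacement $\sigma_t\|P\epsilon\| \approx \sigma_t\sqrt{n}$ and the normal displacement $\approx \sigma_t\sqrt{d-n}$ must keep $x_t$ within the region where the flat patch, of radius $\sqrt{\bar{\alpha}_t}\,r$, determines the projection. Otherwise a distant part of the manifold could be closer, or curvature could change the distance.

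I would handle this in two ways. First, I would restrict the claim to the regime where $\sigma_t\sqrt{d} \lesssim \sqrt{\bar{\alpha}_t}\,r$ times the reach of $\mathcal{M}_0$, where the identity holds exactly on a high-probability event. Second, for larger $t$, I would note that the distance is at most $\sigma_t\|P^\perp\epsilon\|$ anyway, and defer the full uniform statement to the argument of \cite{he2024manifold}, which the paper cites for the formal proof. The tabular encoding plays no role beyond Remark~\ref{rem:tabularrep}, which guarantees the continuous-manifold setting.
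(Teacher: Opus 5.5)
Your proposal is the standard argument for this proposition: the paper gives no proof of its own and defers to \cite{he2024manifold}, whose argument goes back to \cite{chung2022improving}. That argument splits $\epsilon$ into tangent and normal parts, uses local flatness to get $d(x_t,\sqrt{\bar{\alpha}_t}\mathcal{M}_0)=\sigma_t\|P^\perp\epsilon\|$, applies $\chi_{d-n}$ concentration and marginalises over $x_0$, which is exactly your route. Your small-noise restriction is best stated as the event $\sigma_t\|\epsilon\|<\tfrac12\sqrt{\bar{\alpha}_t}\,r$, which has probability at least $1-e^{-s^2/2}$ once $\sigma_t(\sqrt{d}+s)<\tfrac12\sqrt{\bar{\alpha}_t}\,r$. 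On this event every point of $\sqrt{\bar{\alpha}_t}\mathcal{M}_0$ within $\sigma_t\|P^\perp\epsilon\|$ of $x_t$ lies in the flat patch, so no reach condition is needed (and certainly not a product of the reach with $r$).

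The one claim to drop is the fallback that, for larger $t$, the distance is at most $\sigma_t\|P^\perp\epsilon\|$. For a curved or bounded $\mathcal{M}_0$, once $\sigma_t\|P\epsilon\|\ge\sqrt{\bar{\alpha}_t}\,r$ the point $y_0+\sigma_t P\epsilon$ need not lie on the scaled manifold. Unconditionally you only have $d(x_t,\sqrt{\bar{\alpha}_t}\mathcal{M}_0)\le\sigma_t\|\epsilon\|$. Your inequality holds for all $t$ essentially only when $\mathcal{M}_0$ is a whole affine $n$-plane, which is what the cited all-$t$ argument effectively assumes. Incidentally, Assumption~\ref{assump:locallinear} with a uniform $r$, read literally, already forces this on a connected $\mathcal{M}_0$: the affine tangent plane is then locally constant, and the flat $r$-discs make $\mathcal{M}_0$ open and closed in it.
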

Here, $d(x, \sqrt{\bar{\alpha}_t}\mathcal{M}_0) := \inf_{x' \in \mathcal{M}_0} \| x - \sqrt{\bar{\alpha}_t} x' \|_2$ denotes the Euclidean distance from $x$ to the closest point on the scaled manifold $\sqrt{\bar{\alpha}_t} \mathcal{M}_0$.
Note that $\bar{\alpha}_t\to1$ as $t\to0$. As $t$ increases, the shells drift further away from $\mathcal{M}_0$, giving the structure shown in ~\autoref{fig:manifoldtransition}. 

Under the standard training objective in eq.~\ref{eq:denoiseobj}, the model learns to denoise points concentrated on the noisy manifolds $\mathcal{M}_t$, iteratively  reconstructing clean data $x_0$ from Gaussian noise.

\subsection{Generalised Manifold Theory}
\label{sec:ourmethod}
Based on the Gaussian noising process and denoiser training objective introduced in Sec.~\ref{sec:prelims}, 
we now develop a generalised manifold theory for conditional tabular diffusion. We first define the denoiser mapping $Q_t$, which predicts dirty estimates from noisy samples $x_t$ by reversing \eqref{eq:fwdnoising}.
\begin{definition}[``Dirty'' Estimate via Diffusion Model]
\label{def:Qi}
\textit{Let $\epsilon_\theta(x_t, t)$ denote the diffusion model's noise estimate.  
We define the dirty estimate of a noisy sample $x_t$ by reversing eq.~\ref{eq:fwdnoising} (forward process).
\begin{equation}
\label{eq:dirtyestimate}
\hat{x}_0 := Q_t(x_t) := \frac{1}{\sqrt{\bar{\alpha}_t}} \big( x_t - \sqrt{1-\bar{\alpha}_t} \, \epsilon_\theta(x_t, t) \big),
\end{equation}
where $Q_t: \mathbb{R}^d \to \mathbb{R}^d$ maps a noisy input to a point $\hat{x}_0\in\mathcal{M}_0$. Intuitively, $Q_t(x_t)$ lies near the clean data manifold $\mathcal{M}_0$, since it is a reversal of the forward process (eq.~\ref{eq:fwdnoising}) that noises a clean sample.}
\end{definition}
We show that for a model trained under a Gaussian noising process using eq.~\ref{eq:denoiseobj}, $Q_t$ naturally aligns with the \textit{orthogonal projection} of $x_t$ onto $\mathcal{M}_0$, as formalised in the following theorem.

\begin{theorem}[Limiting behaviour of dirty estimates]
\label{thm:denoiser_projection}
Let $\epsilon_\theta$ be a denoiser trained to predict Gaussian noise under the framework of 
eqs.~\ref{eq:fwdnoising} and \ref{eq:denoiseobj}, for a clean sample $x_0$.  
Then, for the mapping $Q_t$ defined in Definition~\ref{def:Qi}, we have
$\lim_{\bar{\alpha}_t \to 1} Q_t(x_t)=\pi(x_t)$,
where $\pi(x_t)$ denotes the orthogonal projection of $x_t$ onto the data manifold $\mathcal{M}_0$.
\end{theorem}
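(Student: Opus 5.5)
The plan is to treat the denoiser as the minimiser of the training objective in eq.~\ref{eq:denoiseobj}. We identify that minimiser via Tweedie's formula and then analyse the posterior mean as the noise level vanishes.

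\textbf{Step 1: the optimal denoiser.} For fixed $t$, the optimal $\epsilon_\theta$ under the MSE objective is the conditional expectation $\epsilon^*(x_t,t)=\mathbb{E}[\epsilon\mid x_t]$. Substituting into eq.~\ref{eq:dirtyestimate} and using eq.~\ref{eq:fwdnoising}, this gives
\[
Q_t(x_t)=\frac{1}{\sqrt{\bar\alpha_t}}\big(x_t-\sqrt{1-\bar\alpha_t}\,\mathbb{E}[\epsilon\mid x_t]\big)=\mathbb{E}[x_0\mid x_t].
\]
Hence $Q_t$ is the posterior mean (Tweedie's formula). Write $\sigma_t^2=(1-\bar\alpha_t)/\bar\alpha_t$ and $y=x_t/\sqrt{\bar\alpha_t}$. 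The posterior then has density
\[
p(x_0\mid x_t)\propto p(x_0)\exp\!\big(-\|y-x_0\|^2/(2\sigma_t^2)\big),
\]
supported on $\mathcal{M}_0$, and $\sigma_t\to0$ as $\bar\alpha_t\to1$.

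\textbf{Step 2: Laplace-type concentration.} As $\sigma_t\to0$, the Gaussian weight concentrates the posterior on the minimisers of $\|y-x_0\|$ over $\mathcal{M}_0$. Assume $y$ lies within the reach of $\mathcal{M}_0$, so the nearest point $\pi(y)$ is unique; this is what makes $\pi$ well defined. Assume also that $p(x_0)$ is positive and continuous near $\pi(y)$. Parametrise $\mathcal{M}_0$ near $\pi(y)$ via $\varphi$ (Assumption~\ref{assump:manifold}). Assumption~\ref{assump:locallinear} then lets us use, inside $B_r(\pi(y))$, the flat chart $x_0=\pi(y)+Uz$ with $U$ an orthonormal basis of $T_{\pi(y)}\mathcal{M}_0$.

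Because $y-\pi(y)\perp T_{\pi(y)}\mathcal{M}_0$, we have the exact split
\[
\|y-x_0\|^2=\|y-\pi(y)\|^2+\|z\|^2
\]
in this ball. The posterior restricted to the ball is therefore the prior density times an $n$-dimensional Gaussian in $z$ with covariance $\sigma_t^2 I$. Its mean is $\pi(y)+O(\sigma_t)$ by continuity of $p$.

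Outside the ball, $\|y-x_0\|^2\ge\|y-\pi(y)\|^2+\delta$ for some $\delta>0$, by uniqueness of the minimiser and compactness or a growth argument. That region's posterior mass is thus $O(e^{-\delta/(2\sigma_t^2)}\sigma_t^{-n})\to0$. Given a finite first moment of $p(x_0)$, its contribution to the mean also vanishes.

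\textbf{Step 3: conclude.} Combining the two regions gives $Q_t(x_t)=\pi(y)+o(1)$. Finally $y=x_t/\sqrt{\bar\alpha_t}\to x_t$ as $\bar\alpha_t\to1$, and $\pi$ is continuous within the reach, so $Q_t(x_t)\to\pi(x_t)$. For one-hot categorical coordinates we only need the relaxed continuous embedding from Remark~\ref{rem:tabularrep}, so that the same argument applies verbatim.

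The main obstacle is Step 2, and specifically making the outside-ball tail estimate rigorous without global curvature assumptions. This requires a uniform separation $\delta$, which I would obtain from uniqueness of the nearest point together with properness of the embedding $\varphi$. A secondary caveat is that Step 1 assumes $\epsilon_\theta$ attains the optimum; I would state the theorem for the ideal minimiser, as is standard in \cite{chung2022improving}.
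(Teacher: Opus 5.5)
Your proof is correct and follows the paper's overall plan. The MSE-optimal denoiser gives $Q_t(x_t)=\mathbb{E}[x_0\mid x_t]$; the paper reaches the same identity by setting the first variation of eq.~\ref{eq:denoiseobj} to zero. The posterior mean is then localised by splitting $\mathcal{M}_0$ into a ball around the nearest point and its complement.

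Where you differ is in how that localisation is carried out. The paper takes an arbitrary $\varepsilon$-ball around $\pi(x_t)$. It bounds the near-field mean by $\varepsilon$ simply because it is an average of points in that ball, sends $\varepsilon\to0$, and delegates the far field to Lemma D.9 of \cite{stanczuk2024diffusion}. That near-field step ignores curvature, and the paper's claim about curved manifolds rests on it.

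You instead use the flat chart of Assumption~\ref{assump:locallinear} to compute the near-field posterior exactly. This gives a quantitative rate $O(\sigma_t)$ and the $\sigma_t^n$ lower bound on the near-field mass that the tail ratio needs. You also prove the tail estimate yourself and name the hypotheses it actually requires: a unique nearest point, closedness of $\mathcal{M}_0$, and a prior that is positive and continuous at $\pi$. The paper leaves these implicit. In addition, you keep $p(x_0)$ and the rescaling $y=x_t/\sqrt{\bar\alpha_t}$, both of which the paper drops when it writes the posterior as $\mathcal{N}(x_0\mid x_t,\sigma_t^2 I)$. That is why your Step~3 is needed and has no counterpart in the paper.

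The cost of your route is that local flatness at $\pi(y)$ becomes essential. It can be removed: use the $\varepsilon$-ball bound for the mean, and for the mass use the Taylor bound $\|y-x_0\|^2\le\|y-\pi(y)\|^2+C\|z\|^2$ in a graph chart over the tangent plane, which holds on any smooth manifold.

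One detail to add: $y$ moves with $t$, so the constants $r$, $\delta$ and the lower bound on $p$ must be uniform for $y$ near $x_t$. To get this, fix the ball $B$ around $\pi(x_t)$ once. The map $y\mapsto\inf_{x_0\in\mathcal{M}_0\setminus B}\|y-x_0\|-d(y,\mathcal{M}_0)$ is $2$-Lipschitz and positive at $x_t$, so it stays positive in a neighbourhood of $x_t$.
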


The formal proof is in Appendix~\ref{app:proof}. Theorem~\ref{thm:denoiser_projection} establishes a general and practically relevant guarantee that the dirty estimates $Q_t(x_t)$ map to the orthogonal projection $\pi(x_t)$ on the data manifold as $\bar{\alpha}_t \to 1$.  \autoref{fig:ortho} intuitively explains this behaviour, where the ``\textit{spotlight}'' for the dirty estimate's density distribution sharpens around the orthogonal projection as $x_t$ approaches $\mathcal{M}_0$. 

We note that~\citet{chung2022improving} present a related claim (their Proposition~2), which does not require the limiting case $\bar{\alpha}_t \to 1$. This is because their proof \textit{implicitly} assumes the manifold is \textit{globally flat}, allowing them to invoke symmetry arguments to explain the orthogonal behaviour (detailed in \autoref{app:proof}). We \emph{generalise} their result, proving that the orthogonal behaviour arises even without such restrictive assumptions, but in the limit as $\bar{\alpha}_t\to1$. We use Theorem~\ref{thm:denoiser_projection} to rigorously justify the assumptions used in the following theorem.
\begin{figure}
    \centering
    \begin{subfigure}[t]{0.32\textwidth}
        \centering
\includegraphics[width=\textwidth]{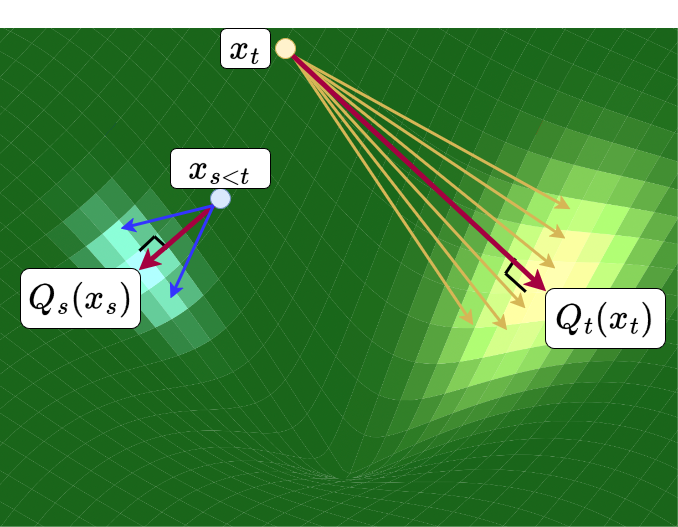}
        \caption{{Spotlight of $x_t$'s projections}}
        \label{fig:ortho}
    \end{subfigure}
    \hfill
    \begin{subfigure}[t]{0.32\textwidth}
        \includegraphics[width=\textwidth]{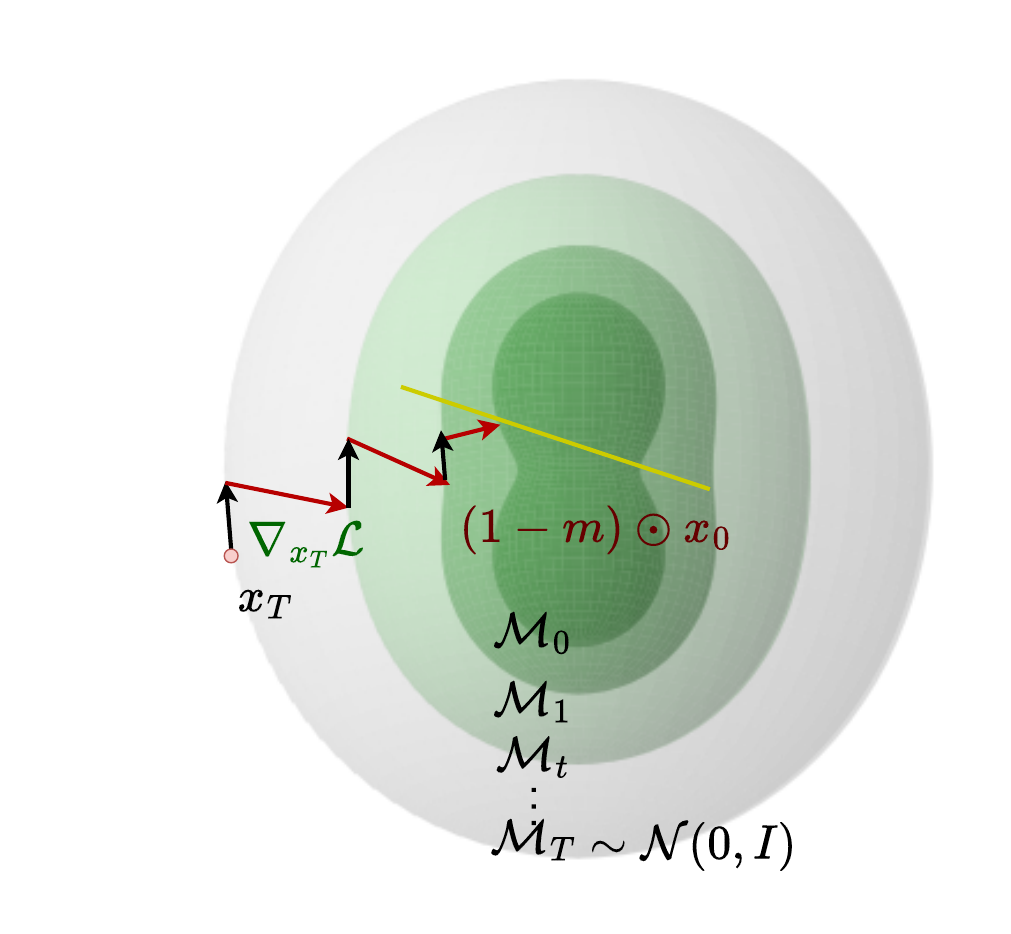}  
        \caption{Imputation constraints}
        \label{fig:impguidance}
    \end{subfigure}
    \hfill
    \begin{subfigure}[t]{0.32\textwidth}
        \centering
\includegraphics[width=\textwidth]{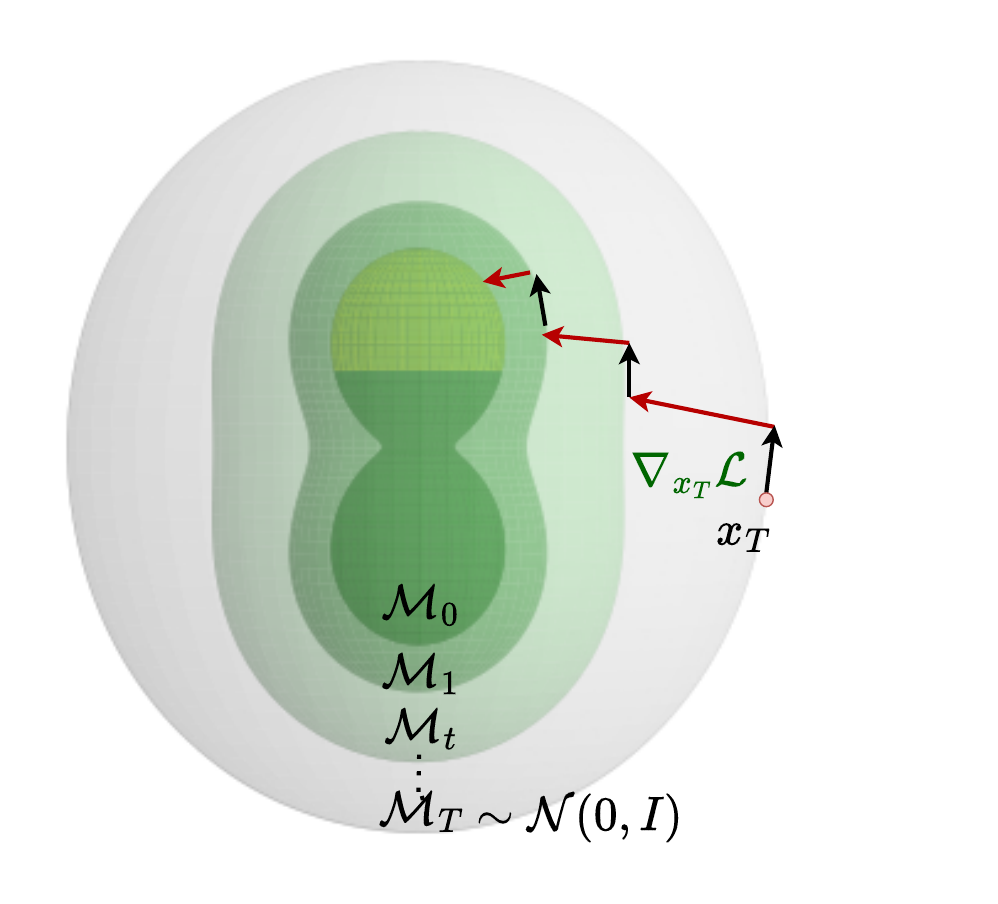}  
        \caption{Inequality conditions }
        \label{fig:rangeguidance}
    \end{subfigure}
    \caption{(a) Shows the orthogonal behaviour of dirty estimates projected from $x_{t}$ to $\hat{x}_0$. Figs.~(a) and (b) show \textsc{Harpoon's} guidance mechanism, interleaving unconditional denoising (\textcolor{BrickRed}{$\rightarrow$}) and tangential updates ($\uparrow$) for (b) imputation constraints of the form $(1-m)\odot x_0$, and (c) inequality constraints. Yellow regions indicate the areas on the manifold $\mathcal{M}_0$ matching the constraint.}
    \label{fig:guidance}
\end{figure}
\begin{theorem}[Manifold-constrained inference-time gradients]
\label{thm:gradient_projection}
Let $c$ denote arbitrary conditioning information and assume $Q_t$ from Definition~\ref{def:Qi} acts as an orthogonal projection onto the data manifold $\mathcal{M}_0$ at $\hat{x}_0 = Q_t(x_t)$. Then, for any differentiable inference-time loss function 
$\mathcal{L}_{\mathrm{inf}}:\mathbb{R}^d \times \mathcal{C} \to \mathbb{R}$, defined by $\mathcal{L}_{\mathrm{inf}}(\hat{x}_0; c)$,  
the gradient lies in the tangent space of $\mathcal{M}_0$ at $\hat{x}_0$, i.e., $\nabla_{x_t}\mathcal{L}_{\mathrm{inf}}(\hat{x}_0, c) \in T_{\hat{x}_0}\mathcal{M}_0$.
\end{theorem}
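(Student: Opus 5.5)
The argument is a chain-rule computation combined with the structure of the Jacobian of an orthogonal projection. First write the gradient of the composite map $x_t \mapsto \mathcal{L}_{\mathrm{inf}}(Q_t(x_t); c)$ as
\[
\nabla_{x_t}\mathcal{L}_{\mathrm{inf}}(\hat{x}_0, c) = \Big(\frac{\partial Q_t(x_t)}{\partial x_t}\Big)^{\!\top} \nabla_{\hat{x}_0}\mathcal{L}_{\mathrm{inf}}(\hat{x}_0, c).
\]
This uses only the differentiability of $\mathcal{L}_{\mathrm{inf}}$ and of the denoiser $\epsilon_\theta$. No property of $\mathcal{L}_{\mathrm{inf}}$ beyond differentiability is needed, which is why the result covers arbitrary inference-time losses rather than only squared error. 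It then suffices to show that $J := \partial Q_t/\partial x_t$ satisfies $\mathrm{range}(J^\top) \subseteq T_{\hat{x}_0}\mathcal{M}_0$.

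Second, use the hypothesis that $Q_t$ acts as the orthogonal projection $\pi$ onto $\mathcal{M}_0$ near $x_t$. This is justified in the regime $\bar{\alpha}_t \to 1$ by Theorem~\ref{thm:denoiser_projection}. By Assumption~\ref{assump:locallinear}, inside $B_r(\hat{x}_0)$ the manifold coincides with the affine plane $\hat{x}_0 + T_{\hat{x}_0}\mathcal{M}_0$. Therefore, for inputs whose projections stay in this ball, $\pi(x) = \hat{x}_0 + P(x - \hat{x}_0)$, where $P$ is the orthogonal projection matrix onto $T_{\hat{x}_0}\mathcal{M}_0$. From this we get $J = P$. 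Since $P$ is symmetric and idempotent, $J^\top = P$, so
\[
\nabla_{x_t}\mathcal{L}_{\mathrm{inf}} = P\,\nabla_{\hat{x}_0}\mathcal{L}_{\mathrm{inf}} \in T_{\hat{x}_0}\mathcal{M}_0 .
\]
Equivalently, for any normal vector $v \perp T_{\hat{x}_0}\mathcal{M}_0$ we have $Jv = 0$, because moving $x_t$ along the normal fibre leaves its projection unchanged. Hence the inner product $\langle \nabla_{x_t}\mathcal{L}_{\mathrm{inf}}, v\rangle = \langle \nabla_{\hat{x}_0}\mathcal{L}_{\mathrm{inf}}, Jv\rangle = 0$.

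The main obstacle is making the identification $J = P$ rigorous. This requires that a whole neighbourhood of $x_t$ be mapped by $\pi$ into the flat patch $B_r(\hat{x}_0)$, so that the differential is governed by the local affine formula. For $\bar{\alpha}_t$ close to 1, $x_t$ lies within distance $O(\sqrt{1-\bar{\alpha}_t})$ of $\mathcal{M}_0$ by Proposition~\ref{prop:noisy}. Continuity of $\pi$ (within the reach of the manifold) then keeps the projections of nearby points inside $B_r(\hat{x}_0)$. If the patch were curved instead of flat, the general formula for $d\pi$ would involve a shape-operator correction. That correction still maps into the tangent space, so I would note that the conclusion persists. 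I would present the flat case via Assumption~\ref{assump:locallinear} as the main argument.

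A secondary subtlety is the scaling $\sqrt{\bar{\alpha}_t}$ between $x_t$ and the manifold. Any such scalar factor multiplies $J$ and does not change its range, so it can be absorbed.
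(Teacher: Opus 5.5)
Your proposal is correct and follows essentially the same route as the paper. Both use the chain rule $\nabla_{x_t}\mathcal{L}_{\mathrm{inf}} = J_{Q_t}^\top w$, then local linearity to get $J_{Q_t}v = v_T$ (i.e.\ $J_{Q_t}=P$), and conclude $J_{Q_t}^\top w = w_T$. Your appeal to the symmetry of $P$, or equivalently to the fact that every normal vector lies in $\ker J_{Q_t}$, is a cleaner finish than the paper's term-matching step, and the kernel version has the extra merit of not relying on flatness.
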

\begin{wrapfigure}{r}{0.4\textwidth}
    \centering
    \includegraphics[width=0.4\textwidth]{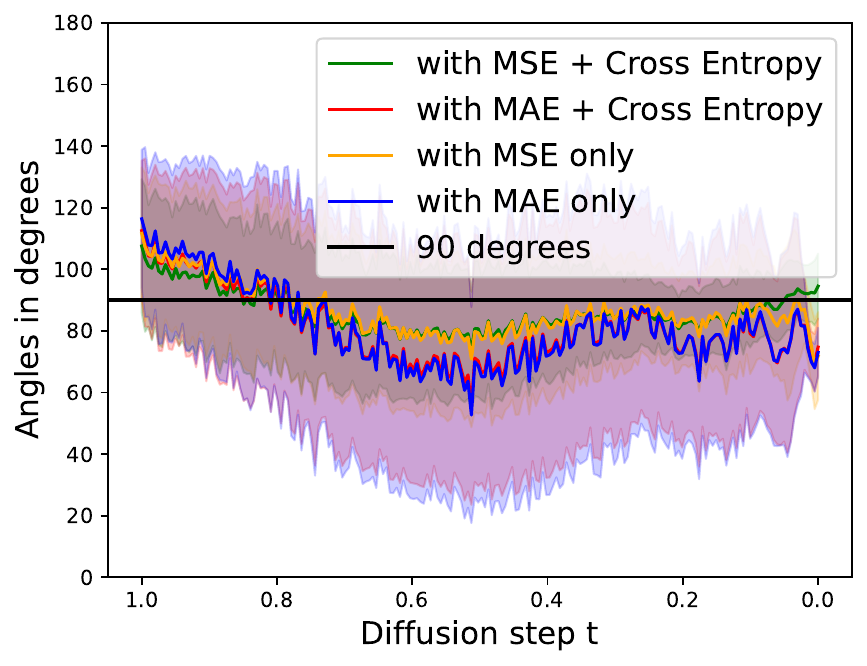}
    \caption{{Avg. angle between gradients (100 samples) and dirty estimates $\hat{x}_0$ for various loss functions on Adult.}}
    \label{fig:gradangles}
\end{wrapfigure}

%
We provide the proof in Appendix~\ref{app:proof}. Theorem~\ref{thm:gradient_projection} establishes a {powerful} result that the gradients of \emph{any} differentiable objective at inference ($\mathcal{L}_\mathrm{inf}$) are aligned with the tangent space of the data manifold. This generalises prior work, which proved the tangential behaviour only for squared-error inference-time losses of the form $||W(x_0-H(\hat{x}_0))||_2^2$ under restrictive symmetry and manifold-linearity assumptions (Theorem 1, \cite{chung2022improving}). Our formulation is strictly more general, covering curved geometries. It applies to \textit{arbitrary} differentiable losses, so supports a rich spectrum of inference conditions $c$, ranging from partially observed values (e.g. imputation masks) to prompts, metadata, or inequality constraints. Importantly, the inference-time losses ($\mathcal{L}_\mathrm{inf}$) can be different from the training loss ($\mathcal{L}_\mathrm{train}$, eq.~\ref{eq:denoiseobj}). 

\textbf{Tabular adaptation.} Theorem~\ref{thm:gradient_projection} is critical for tabular data. Even if training relies on a continuous Gaussian noising framework to avoid separating discrete and continuous diffusion, inference can incorporate losses tailored to discrete variables, such as cross-entropy, inequality constraints, or even $L_1$-based penalties with sparsity-inducing properties~\cite{bach2012optimization} suited to tabular domains. 

\textbf{Empirical orthogonality.} Strictly speaking, the tangential behaviour of gradients requires the dirty estimate $Q_t(x_t)$ to be orthogonal, which is guaranteed only in the limiting case $\bar{\alpha}_t \to 1$, i.e., as $t\to0$. However, in practice, we observe that the gradients remain close to $90^\circ$ with $Q_t(x_t)$ even at larger time steps, suggesting that the orthogonality property extends {well beyond the limit} (see \autoref{fig:gradangles}). Additionally, \autoref{fig:gradangles} shows a consistent behaviour across varying inference-time losses under the same training objective (MSE), giving an empirical verification of Theorem~\ref{thm:gradient_projection}.


Theorems~\ref{thm:denoiser_projection} and \ref{thm:gradient_projection}, together with the observations in \autoref{fig:gradangles}, directly motivate \textsc{Harpoon's} design, which functions like a {high-precision guidance system}. The denoiser estimate $Q_t(x_t)$ acts like a \textit{compass}, pointing toward the orthogonal projection onto $\mathcal{M}_0$. Meanwhile, the manifold-aligned gradients act as \textit{anchors}, steering updates \textit{tangentially} along the manifold surface, keeping  samples within the manifold shells for reliable model estimates. The idea is analogous to a ship navigating against a current through steady movements along the manifold without drifting off course.

Leveraging this intuition, \textsc{Harpoon} interleaves \textit{unconditional} denoising steps with tangential gradient corrections, \textit{guiding} samples towards the specified conditions. Since consecutive manifolds are nearly parallel (Proposition~\ref{prop:noisy}), we can (unconditionally) denoise one step followed by a tangential correction using gradients from the previous step, gradually converging towards the target region (see Figs.~\ref{fig:impguidance},\ref{fig:rangeguidance}). This guidance system naturally adapts to a variety of tabular conditions. For inequality constraints (\autoref{fig:rangeguidance}), the target areas span a region on the manifold's surface, and Harpoon guides samples to \textit{``land''} within this region. For imputation-style tasks (\autoref{fig:impguidance}), the constraints instead anchor samples to the partial observations on the manifold of the form $(1-m)\odot{x}_0$ (see \autoref{app:taskdefs}), lying on a constraint line for a given missing value binary mask $m$.

At a high level, this interleaved sampling is structurally similar to prior training-free guidance methods \citep{chung2023diffusion,chung2022improving,he2024manifold,bansal2023universal,song2023loss,yu2023freedom,scassola2025zero}. The distinction is in the class of theoretically justified objectives. Existing work either justifies guidance only under restrictive flat-manifold assumptions \citep{chung2022improving,chung2023diffusion,he2024manifold}, or by applying inference-time losses without analysis of geometric compatibility with the underlying manifold \citep{song2023loss,bansal2023universal,yu2023freedom,scassola2025zero}. We remove this dichotomy by generalising the tangent-space behaviour of guidance even to arbitrary inference-time objectives.
\vspace{-3mm}
\subsection{Harpoon Algorithm}
\label{sec:algorithm}
\begin{wrapfigure}{r}{0.55\textwidth}
\begin{minipage}{0.55\textwidth}
\begin{algorithm}[H]
\caption{Sampling with \textsc{Harpoon}}
\label{alg:harpoon}
\begin{algorithmic}[1]
\State Initialize $x_T \sim \mathcal{N}(0, I)$
\For{$t = T, \ldots, 1$}
    \State Sample $z \sim \mathcal{N}(0, I)$ if $t > 1$, else $z = 0$
    \State Estimate noise $\epsilon_{\theta}(x_t, t)$
    \State Obtain dirty estimate $\hat{x}_0$ (eq.~\ref{eq:dirtyestimate})
    \State Get tangential gradient $g=\nabla_{x_t} \mathcal{L}_\mathrm{inf}(\hat{x}_0, c)$ \label{ln:cachedgrad}
    \State Set $\sigma_t=\frac{(1-\alpha_t).(1-\bar{\alpha}_{t-1})}{(1-\bar{\alpha}_t)}$
    \State Denoise one step:
    \Statex \(\quad \quad
        {x}_{t-1}' = \frac{1}{\sqrt{\alpha_t}}
        \left(x_t - \frac{1-\alpha_t}{\sqrt{1-\bar{\alpha}_t}} \,\epsilon_\theta(x_t, t)\right) + \sigma_t z
    \) \label{ln:onestep}
    \State Apply tangential update:
    \Statex\(\quad \quad
        x_{t-1} = {x}_{t-1}' - \eta \cdot g 
    \) \label{ln:update}
\EndFor
\State \Return $x_0$
\end{algorithmic}
\end{algorithm}
\end{minipage}
\end{wrapfigure}

We concretely detail the guidance process in Algorithm~\ref{alg:harpoon}. At each step $t$, we first compute the dirty estimate using \eqref{eq:dirtyestimate}. We then pre-compute the tangential gradient corrections by applying the the inference-time objective ($\mathcal{L}_\mathrm{inf}$) on the dirty estimates, i.e., $\nabla_{x_t}\mathcal{L_\mathrm{inf}}(\hat{x}_0, c)$ (line \ref{ln:cachedgrad}). Then, the sample is \textit{unconditionally} denoised one step using the standard reverse process~\citep{ho2020denoising} (line \ref{ln:onestep}).
As the denoising step is unconstrained, we apply the pre-computed tangential correction to guide the sample towards the objective using a guidance strength $\eta$ (line~\ref{ln:update}).

Importantly, the inference-time loss $\mathcal{L}_\mathrm{inf}(\hat{x}_0, c)$ can be different from the training objective, including imputation or ReLU-based inequality losses~\citep{donti2021} (see \autoref{app:taskdefs}), handling a rich variety of generative tabular conditions. Moreover, since tabular data is represented using sparse one-hot encoded discrete features, we use the \textit{sparsity-inducing} Mean-Absolute-Error (MAE) loss for improved convergence at inference time by default~\citep{bach2012optimization}.

%% file: ICLR_2025_Template/chapters/results.tex
\vspace{-3mm}
\section{Experiments}
We evaluate the practical benefits of \textsc{Harpoon}'s manifold-guided updates for conditional generation across two representative tasks:
(i) \emph{imputation} of missing values from partially observed samples, and
(ii) \emph{inequality constraints}, which  enforce conditions without relying on ground-truth targets.
We further analyse the effect of tabular-specific losses at inference time, study different guidance schedules, and measure runtime efficiency. Additional experiments, covering alternative mask patterns, encoding schemes, and evaluation metrics, are reported in \autoref{app:additionalexp}.
\vspace{-3mm}
\subsection{Experimental settings}
\textbf{Datasets.} We evaluate on eight widely used benchmarks, mostly from the UCI repository~\citep{dua2017uci}. Five with only continuous features ({Gesture}, {Magic}, {California}, Letter, and {Bean}), and three with both discrete and categorical features (Adult, Default, and Shoppers). The preprocessing steps are detailed in Appendix~\ref{app: datasets}. Each dataset is randomly split into 70\% training and 30\% test sets. 

\textbf{Baselines.}
We compare against SOTA tabular methods spanning causal graphs, deep learning, generative, and large language model based (LLM) approaches. The baselines include: (i) {MIRACLE}~\citep{kyono2021miracle}, a causal graph-based imputation method, (ii) {DiffPuter}~\citep{zhang2025diffputer}, a SOTA conditional tabular diffusion model, (iii) {GReaT}~\citep{borisovlanguage}, an LLM-based method for tabular generation and imputation, (iv) {Remasker}~\citep{du2024remasker}, a masked autoencoder imputation method, and (v) {GAIN}~\citep{yoon2018gain}, a conditional tabular GAN model. For fairness, all diffusion-based approaches, including ours, share the same backbone architecture. Details on the baseline implementations and hyperparameters are provided in \autoref{sec:baselines}.

\textbf{Evaluation Protocol.} Since our goal is to enable conditioning at inference, all models are pre-trained \textit{once} and then evaluated strictly at inference time across different tasks, including imputation and inequality conditions. For the imputation tasks, we evaluate under missingness ratios of 0.25, 0.5, and 0.75. Continuous features are evaluated using the {Mean-Squared-Error (MSE)} between imputed values and the ground truths, while categorical features are assessed using the {match accuracy}. We consider three missingness patterns following prior work~\citep{muzellec2020missing}: \textit{Missing-At-Random} ({MAR}) (main text), and \textit{Missing-Completely-At-Random} (\text{MCAR}) and Missing-Not-At-Random (\text{MNAR}) (Appendix~\ref{app:mcarmnar}). Dataset preprocessing and mask generation details are provided in \autoref{app:task_config}.
For the {general inequality conditions}, we consider four scenarios: (i) \textit{range} constraints on a continuous feature (e.g.  12 > \emph{Age} >= 10); (ii) \textit{categorical} constraints, where a discrete feature is fixed to a target value (e.g. \emph{Gender} = `Male'); {(iii) \textit{conjunctions} (\textsc{and}), where both the range and categorical conditions are applied simultaneously; and (iv) \emph{disjunctions} (\textsc{or}) of range and categorical constraints, where either one may be satisfied.} Details of the constraint specification and implementation are provided in \autoref{app:task_config}. For these tasks, we report the \textit{constraint violation rate}~\citep{stoian2024}, measuring the fraction of samples that break the enforced restriction, the \textit{$\alpha$-score}~\citep{alaa2022faithful}, which quantifies  fidelity, {and the \emph{utility} using a downstream XGBoost \citep{chen2016xgboost} classifier trained on the synthetic data and evaluated against a holdout test set (additional details in Appendix~\ref{sec:metrics}). We also provide results using additional metrics in Appendix~\ref{sec:moremetrics}.} All results are averaged across five trials to account for variance.
\input{ICLR_2025_Template/tables/mar}

\subsection{Imputation results}
\label{sec:imputation}

Tables~\ref{tab:MARmse} and \ref{tab:MARacc} report the MSE and imputation accuracy for continuous and categorical features respectively. For continuous features, \textsc{Harpoon} (ours), DiffPuter, and Remasker are generally the best-performing. While the inference-time methods (DiffPuter and \textsc{Harpoon}) remain robust even under high missingness scenarios, training-time methods, such as Remasker and GAIN, degrade sharply under increased missing ratios (e.g., Gesture at 75\%), probably because they rely on training-time mask patterns and missing ratios, which limits generalisation at inference. LLM-based generation with GReaT is unreliable, occasionally producing extremely high MSE values (e.g. gesture), indicating their limited effectiveness for tabular tasks. While the causal-learning method, MIRACLE, sometimes achieves low errors, its performance can be unstable, collapsing entirely under high missingness (e.g., Default at 75\%). This likely arises because MIRACLE constructs a fresh imputer per task at inference rather than leveraging a pre-trained model, which can fail when insufficient information is available under high missing ratios.

For categorical features, \textsc{Harpoon} often achieves the highest accuracy across nearly all datasets and missingness ratios, outperforming DiffPuter, Remasker, and GReaT. While DiffPuter remains competitive, its RePaint strategy~\citep{lugmayr2022repaint} of independently noising observed and masked entries can introduce semantic mismatches, reducing alignment with observed categories. Most other methods, except for GReaT, typically only model continuous features, resulting in poor performance on categorical imputation. Overall, we see that \textsc{Harpoon} provides robust and reliable inference-time imputation across both continuous and categorical features.

\subsection{Results on Inequality constraints}
\input{ICLR_2025_Template/tables/maracc}
\autoref{tab:genconstraint} compares the best generative baselines from Tables~\ref{tab:MARmse} and \ref{tab:MARacc} (\textsc{Harpoon}, DiffPuter, and GReaT) on general tabular inequality constraints. Deterministic imputers such as Remasker are excluded, as they produce identical outputs when inputs are identical, making them unsuitable for such tasks which do not have partially observable ground-truths.

For range constraints, both DiffPuter and GReaT exhibit high violation rates (74–96\%). This likely stems from their approach of treating conditional tasks as standard imputation problems, which is ill-suited to inequality constraints because there is no observed ground truth, only a plausible range. As a result, these methods rely on rejection sampling. In contrast, \textsc{Harpoon} handles these tasks using a differentiable loss specifically designed for inequalities (see eq.~\ref{eq:inequality}, Appendix~\ref{app:taskdefs}). This provides greater flexibility, reducing violations to \textbf{below 8\%} on Adult and Default, and 20\% on Shoppers, while generally achieving the fidelity and utility scores.

For categorical constraints, all methods achieve \textbf{near-zero} violations. However, \textsc{Harpoon} generally attains the highest precision, despite a slightly higher violation rate than the others. This is because both DiffPuter and GReaT handle categorical conditions like imputation tasks and enforce ground-truth conditions (e.g., {\emph{color} = `red'}) using hard substitutions, which can push samples off-manifold. In contrast, \textsc{Harpoon} enforces constraints through soft differentiable losses, allowing all features to be coherently adapted via gradient steps at the cost of a small violation rate (\textbf{$\leq$2\%}). Hence, samples remain aligned with the manifold giving outputs with high fidelity and utility scores. 

 Under conjunctions, DiffPuter and GReaT frequently fail, with violation rates between 78–97\%. In contrast, \textsc{Harpoon} maintains low violations (\textbf{2–12\%}) while maintaining strong fidelity and utility across datasets. This demonstrates its ability to flexibly satisfy diverse types of constraints without sacrificing sample quality. {Disjunctions show that \textsc{harpoon} has lowest violation rates amongst methods, and is also consistently lower than the worst of its range or categorical constraint in isolation. This is intuitive, since the conjunctions increase the feasible region, making it easier to satisfy. However, fidelity scores sometimes show a stark drop (Default, Shoppers). We hypothesise that this is because the model's gradient updates may prioritize one condition over another, skewing the distribution toward certain feasible regions (see \autoref{fig:disconnected}, Appendix~\ref{app:taskdefs}). Nevertheless, high utility scores indicate that the generated samples remain useful for downstream tasks.}

\input{ICLR_2025_Template/tables/generalconstraint}

\subsection{Ablation: Choice of inference-time loss}

Table~\ref{tab:lossperformance} compares the imputation scores of various inference-time loss functions for \textsc{Harpoon}. Across all scenarios, the best performance, both in MSE and imputation accuracy scores, consistently comes from MAE or MAE with cross-entropy (CE). In contrast, the MSE loss variants perform worse, despite MSE being the training-time objective. This is because MAE is sparsity-inducing, which aligns well with the structure of discrete features represented as sparse one-hot vectors, while MSE ``spreads'' errors across dimensions, giving imprecise categorical predictions.

\begin{myframe}[innerleftmargin = 5pt]
\begin{remark}
The ablation results in \autoref{tab:lossperformance} have important implications. It shows that using a tabular-specific loss at inference (e.g. sparsity-inducing MAE) can even outperform the loss used in training, directly verifying Theorem~\ref{thm:gradient_projection}. Once the denoiser learns to project orthogonally onto the data manifold, the gradients of \emph{any} differentiable inference-time objective lie in the tangent space of the manifold, enabling guidance with respect to arbitrary loss choices. Notably, this does not require special handling of discrete features during training or separate discrete and continuous diffusion processes, like the ones introduced in Sec.~\ref{sec:background}. 
\end{remark}

\end{myframe}
\input{ICLR_2025_Template/tables/ablationlosscombined}

\subsection{{Ablation: Choice of Guidance Schedule}}

{Motivated by the discussions in Sec.\ref{sec:algorithm} and the observations in \autoref{fig:gradangles}, Harpoon injects tangential corrections at \emph{every} denoising step during inference, since the gradients remain nearly orthogonal for the entire denoising process. But Theorem \ref{thm:gradient_projection}, says that orthogonality strictly holds only as outputs approach the manifold \(\mathcal{M}_0\).} {Therefore, we compare two strategies: a \textit{linear schedule}, which gradually increases the guidance strength ($\eta$) up to a maximum of 0.2 during denoising, and our default \textit{constant schedule}, which maintains a guidance strength of 0.2 for all steps (\autoref{tab:schedule}). Across all scenarios, the linear schedule consistently leads to lower imputation accuracy and higher MSE. We hypothesize that this occurs because even approximate guidance early in the denoising process helps steer the model toward feasible regions. Delaying guidance until later steps may leave the model insufficient time to converge to regions consistent with the constraints.}

\input{ICLR_2025_Template/tables/ablation_guidance_schedule}

\subsection{{Runtime comparisons}}

{Table~\ref{tab:runtimessmall} compares the runtimes of various methods on three datasets (more datasets in Appendix~\ref{sec:appruntime}) using a 12-core AMD Ryzen 9 5900 processor with an NVIDIA RTX 3090 GPU, CUDA 12.1, Ubuntu 20.04.6, and Pytorch 2.2.1 with Python 3.12.}   
\input{ICLR_2025_Template/tables/runtimes_small}
{We observe that \textsc{Harpoon} has a runtime within roughly 2$\times$ of its ``unconditional'' proxy, DiffPuter. This is expected because DiffPuter performs only the forward pass and does not propagate gradients, whereas \textsc{Harpoon} interleaves denoising steps with tangential gradient updates, which doubles the function calls to the backbone network.}  {Nevertheless, \textsc{Harpoon} remains extremely efficient, taking \textbf{less than 5 seconds} even on the largest dataset, Adult. In contrast, the LLM autoregressive paradigm (GReaT) and the causal graph-based method (Miracle) are significantly slower and have lower imputation quality. The GAN-based method, GAIN, is very fast, but produces poor-quality generations compared to diffusion-based approaches.}

%% file: ICLR_2025_Template/tables/mar.tex
\begin{table}[t]
\centering
\caption{Imputation MSE on continuous features for MAR mask. Standard deviation in subscript.}
\label{tab:MARmse}
\resizebox{.98\textwidth}{!}{
\begin{tabular}{llllllllll}
\toprule
Ratio & Method & Adult & Bean & California & Default & Gesture & Letter & Magic & Shoppers \\
\midrule
\multirow{6}{*}{ 0.25} & GAIN & 1.86$_{0.04}$ & 1.41$_{0.03}$ & 15.06$_{4.32}$ & 6.86$_{1.61}$ & 1.22$_{0.08}$ & 1.05$_{0.04}$ & 1.27$_{0.24}$ & 1.67$_{0.26}$ \\
 & Miracle & 2.17$_{0.34}$ & 0.31$_{0.12}$ & 0.50$_{0.11}$ & 1.51$_{0.38}$ & 0.94$_{0.10}$ & 1.32$_{0.11}$ & 0.88$_{0.14}$ & 2.21$_{0.76}$ \\
 & GReaT & 3.70$_{3.26}$ & 1.82$_{0.05}$ & 1.55$_{0.05}$ & 11.39$_{15.12}$ & >$10^5$ & 1.67$_{0.04}$ & 1.64$_{0.15}$ & 1.51$_{0.10}$ \\
 & Remasker & \textbf{0.92}$_{0.04}$ & 0.23$_{0.06}$ & 0.89$_{0.03}$ & \underline{0.56}$_{0.05}$ & 0.70$_{0.06}$ & 0.57$_{0.03}$ & \underline{0.81}$_{0.11}$ & \textbf{0.72}$_{0.12}$ \\
 & DiffPuter & 1.04$_{0.09}$ & \underline{0.18}$_{0.03}$ & \underline{0.47}$_{0.04}$ & 0.62$_{0.08}$ & \textbf{0.40}$_{0.07}$ & \textbf{0.42}$_{0.04}$ & \textbf{0.77}$_{0.16}$ & 0.76$_{0.10}$ \\
 & \textsc{Harpoon} & \underline{0.99}$_{0.08}$ & \textbf{0.15}$_{0.06}$ & \textbf{0.42}$_{0.04}$ & \textbf{0.51}$_{0.06}$ & \underline{0.51}$_{0.09}$ & \underline{0.52}$_{0.04}$ & 0.97$_{0.29}$ & \underline{0.73}$_{0.06}$ \\
 \midrule
\multirow{6}{*}{ 0.50} & GAIN & 3.37$_{0.21}$ & 1.94$_{0.12}$ & 18.69$_{5.33}$ & 23.17$_{7.05}$ & 2.97$_{0.32}$ & 1.11$_{0.03}$ & 1.36$_{0.45}$ & 3.86$_{1.20}$ \\
 & Miracle & 2.08$_{0.40}$ & 0.49$_{0.06}$ & 0.92$_{0.07}$ & 1.03$_{0.19}$ & 1.04$_{0.09}$ & 0.83$_{0.03}$ & 1.07$_{0.14}$ & 1.36$_{0.23}$ \\
 & GReaT & 1.61$_{0.21}$ & 4.94$_{4.65}$ & 1.41$_{0.19}$ & 1.84$_{1.12}$ & >$10^5$ & 1.39$_{0.04}$ & 1.24$_{0.11}$ & >$10^5$ \\
 & Remasker & \textbf{0.94}$_{0.03}$ & 0.42$_{0.05}$ & 0.90$_{0.05}$ & \underline{0.59}$_{0.05}$ & 0.97$_{0.08}$ & \underline{0.78}$_{0.03}$ & \textbf{0.77}$_{0.08}$ & \underline{0.82}$_{0.10}$ \\
 & DiffPuter & 1.18$_{0.08}$ & \underline{0.28}$_{0.06}$ & \underline{0.79}$_{0.12}$ & 0.64$_{0.07}$ & \textbf{0.48}$_{0.06}$ & \textbf{0.75}$_{0.04}$ & \underline{0.80}$_{0.08}$ & 0.88$_{0.13}$ \\
 & \textsc{Harpoon} & \underline{1.08}$_{0.06}$ & \textbf{0.19}$_{0.05}$ & \textbf{0.72}$_{0.07}$ & \textbf{0.53}$_{0.06}$ & \underline{0.63}$_{0.09}$ & 0.86$_{0.05}$ & 0.86$_{0.16}$ & \textbf{0.81}$_{0.06}$ \\
 \midrule
\multirow{6}{*}{ 0.75} & GAIN & 7.44$_{0.21}$ & 3.10$_{0.07}$ & 22.74$_{1.33}$ & 62.23$_{10.77}$ & 9.70$_{1.17}$ & 1.29$_{0.03}$ & 2.04$_{0.59}$ & 14.85$_{1.52}$ \\
 & Miracle & 1.27$_{0.04}$ & 0.75$_{0.06}$ & \textbf{0.98}$_{0.02}$ & >$10^9$ & 1.23$_{0.05}$ & \textbf{1.00}$_{0.02}$ & \underline{1.02}$_{0.03}$ & \underline{1.11}$_{0.00}$ \\
 & GReaT & 1.70$_{0.48}$ & 4.55$_{2.46}$ & 1.43$_{0.06}$ & 2.10$_{0.58}$ & >$10^5$ & 1.44$_{0.04}$ & 1.19$_{0.04}$ & 1.28$_{0.02}$ \\
 & Remasker & \textbf{1.03}$_{0.01}$ & 1.14$_{0.07}$ & 1.13$_{0.06}$ & \underline{0.89}$_{0.05}$ & 3.98$_{0.63}$ & \underline{1.15}$_{0.02}$ & 1.30$_{0.10}$ & 1.17$_{0.03}$ \\
 & DiffPuter & 1.49$_{0.09}$ & \underline{0.61}$_{0.04}$ & 1.10$_{0.08}$ & \underline{0.89}$_{0.10}$ & \textbf{0.71}$_{0.03}$ & 1.22$_{0.04}$ & 1.08$_{0.09}$ & 1.26$_{0.08}$ \\
 & \textsc{Harpoon} & \underline{1.26}$_{0.07}$ & \textbf{0.37}$_{0.03}$ & \underline{1.02}$_{0.07}$ & \textbf{0.72}$_{0.06}$ & \underline{0.84}$_{0.05}$ & 1.30$_{0.05}$ & \textbf{1.00}$_{0.08}$ & \textbf{1.10}$_{0.05}$ \\
\bottomrule
\end{tabular}}
\end{table}

%% file: ICLR_2025_Template/tables/maracc.tex
\begin{wraptable}{r}{0.45\textwidth}

\centering

\caption{Categorical imputation accuracy ($\uparrow$) for MAR mask.}
\label{tab:MARacc}

\resizebox{\linewidth}{!}{%
\begin{tabular}{lllll}
\toprule
Ratio & Method & Adult & Default & Shoppers \\
\midrule
\multirow{6}{*}{ 0.25} & GAIN & 22.11$_{6.97}$ & 51.30$_{2.66}$ & 41.25$_{7.70}$ \\
 & Miracle & 42.10$_{5.85}$ & 58.51$_{5.05}$ & 27.22$_{6.83}$ \\
 & GReaT & 22.88$_{1.56}$ & 44.13$_{0.49}$ & 37.69$_{5.77}$ \\
 & Remasker & 46.81$_{11.19}$ & 69.63$_{1.76}$ & 39.83$_{6.07}$ \\
 & DiffPuter & \textbf{69.47}$_{3.05}$ & \underline{71.58}$_{2.53}$ & \textbf{56.19}$_{4.49}$ \\
 & \textsc{Harpoon} & \underline{69.44}$_{3.11}$ & \textbf{73.49}$_{2.53}$ & \underline{51.17}$_{5.20}$ \\\midrule
\multirow{6}{*}{ 0.50} & GAIN & 21.85$_{8.01}$ & 42.28$_{3.24}$ & 27.06$_{8.02}$ \\
 & Miracle & 33.73$_{8.89}$ & 55.11$_{1.93}$ & 30.09$_{5.93}$ \\
 & GReaT & 45.97$_{2.50}$ & 55.81$_{1.97}$ & 37.12$_{5.95}$ \\
 & Remasker & 45.16$_{11.05}$ & \underline{63.00}$_{1.27}$ & 38.44$_{6.27}$ \\
 & DiffPuter & \underline{63.33}$_{4.58}$ & 62.92$_{3.06}$ & \textbf{52.14}$_{4.67}$ \\
 & \textsc{Harpoon} & \textbf{64.80}$_{3.29}$ & \textbf{69.55}$_{1.91}$ & \underline{48.39}$_{5.00}$ \\\midrule

\multirow{6}{*}{ 0.75} & GAIN & 22.41$_{1.45}$ & 20.69$_{1.26}$ & 17.61$_{4.59}$ \\
 & Miracle & 35.46$_{2.51}$ & 41.74$_{14.87}$ & 38.33$_{4.68}$ \\
 & GReaT & 50.16$_{3.07}$ & \underline{56.93}$_{1.43}$ & 37.89$_{4.35}$ \\
 & Remasker & 38.67$_{1.43}$ & 50.28$_{1.18}$ & 44.15$_{5.79}$ \\
 & DiffPuter & \underline{56.99}$_{3.31}$ & 49.24$_{3.40}$ & \textbf{50.94}$_{5.07}$ \\
 & \textsc{Harpoon} & \textbf{58.81}$_{3.85}$ & \textbf{61.48}$_{3.83}$ & \underline{48.37}$_{4.39}$ \\
\bottomrule
\end{tabular}}
\end{wraptable}

%% file: ICLR_2025_Template/tables/generalconstraint.tex
\definecolor{lightgray}{gray}{0.9} 
\begin{table}[t]
\centering
\caption{Constraint Violation rate ($\downarrow$), fidelity ($\alpha$-score ($\uparrow$)) and {downstream utility ($\uparrow$) of methods.}}
\label{tab:genconstraint}

\renewcommand{\arraystretch}{1.1}

\resizebox{\textwidth}{!}{%
\begin{tabular}{llccc>{\columncolor{lightgray}}c>{\columncolor{lightgray}}c>{\columncolor{lightgray}}cccc}
\toprule
Constr. & Method & \multicolumn{3}{c}{Adult} & \multicolumn{3}{c}{Default} & \multicolumn{3}{c}{Shoppers} \\
 &  & Viol. \% & $\alpha$-score & Utility & Viol. \% & $\alpha$-score & Utility & Viol. \% & $\alpha$-score & Utility \\
\midrule
\multirow{3}{*}{\emph{Range}} & DiffPuter & \underline{73.86}$_{1.03}$ & \underline{0.83}$_{0.02}$ & \underline{0.94}$_{0.00}$ & \underline{88.53}$_{1.21}$ & 0.65$_{0.02}$ & \underline{0.78}$_{0.15}$ & \underline{76.69}$_{1.61}$ & \underline{0.68}$_{0.01}$ & \textbf{0.89}$_{0.01}$ \\
 & GReaT & 83.58$_{0.69}$ & \textbf{\textbf{0.92}}$_{0.01}$ & \textbf{0.95}$_{0.00}$ & 95.37$_{0.83}$ & \underline{0.67}$_{0.01}$ & 0.70$_{0.18}$ & 96.53$_{0.46}$ & \textbf{0.88}$_{0.00}$ & \underline{0.35}$_{0.43}$ \\
 & Harpoon & \textbf{3.06}$_{0.26}$ & \textbf{0.92}$_{0.01}$ & \textbf{0.95}$_{0.00}$ & \textbf{7.78}$_{0.69}$ & \textbf{0.79}$_{0.02}$ & \textbf{0.85}$_{0.01}$ & \textbf{20.55}$_{0.95}$ & 0.60$_{0.01}$ & \textbf{0.89}$_{0.01}$ \\
\midrule
\multirow{3}{*}{\emph{Category}} & DiffPuter & \textbf{0.00}$_{0.00}$ & \underline{0.83}$_{0.02}$ & \underline{0.87}$_{0.01}$ & \textbf{0.00}$_{0.00}$ & \underline{0.85}$_{0.02}$ & \underline{0.81}$_{0.06}$ & \textbf{0.00}$_{0.00}$ & 0.52$_{0.02}$ & \underline{0.82}$_{0.12}$ \\
 & GReaT & \textbf{0.00}$_{0.00}$ & 0.72$_{0.02}$ & \underline{0.87}$_{0.01}$ & \textbf{0.00}$_{0.00}$ & \underline{0.85}$_{0.01}$ & \textbf{0.83}$_{0.01}$ & \textbf{0.00}$_{0.00}$ & \textbf{0.86}$_{0.03}$ & 0.71$_{0.35}$ \\
 & Harpoon & \underline{2.18}$_{0.36}$ & \textbf{0.86}$_{0.02}$ & \textbf{0.88}$_{0.01}$ & \underline{0.91}$_{0.19}$ & \textbf{0.92}$_{0.01}$ & \textbf{0.83}$_{0.00}$ & \textbf{0.00}$_{0.00}$ & \underline{0.71}$_{0.03}$ & \textbf{0.88}$_{0.01}$ \\
\midrule
\multirow{3}{*}{\shortstack{\emph{Range}\\ \textsc{and} \\ \emph{Category}}} & DiffPuter & \underline{78.88}$_{6.41}$ & 0.73$_{0.04}$ & \textbf{0.92}$_{0.03}$ & \underline{91.30}$_{0.87}$ & 0.59$_{0.03}$ & \underline{0.56}$_{0.13}$ & \underline{78.68}$_{3.28}$ & \underline{0.76}$_{0.02}$ & \textbf{0.85}$_{0.02}$ \\
 & GReaT & 80.22$_{3.38}$ & \textbf{0.90}$_{0.01}$ & \underline{0.91}$_{0.02}$ & 94.79$_{1.77}$ & \underline{0.61}$_{0.02}$ & 0.52$_{0.16}$ & 95.23$_{1.06}$ & 0.73$_{0.02}$ & \underline{0.84}$_{0.01}$ \\
 & Harpoon & \textbf{3.37}$_{1.42}$ & 0.82$_{0.05}$ & \underline{0.91}$_{0.03}$ & \textbf{2.98}$_{0.77}$ & \textbf{0.87}$_{0.01}$ & \textbf{0.81}$_{0.02}$ & \textbf{12.05}$_{2.19}$ & \textbf{0.81}$_{0.04}$ & \textbf{0.85}$_{0.01}$ \\
\midrule
\multirow{3}{*}{{\shortstack{\emph{Range}\\ \textsc{or} \\ \emph{Category}}}} & DiffPuter & \underline{63.22}$_{0.57}$ & \textbf{0.88}$_{0.00}$ & \underline{0.93}$_{0.00}$ & \underline{23.03}$_{0.26}$ & \underline{0.88}$_{0.01}$ & \underline{0.82}$_{0.07}$ & \underline{50.56}$_{0.42}$ & \underline{0.53}$_{0.01}$ & \underline{0.88}$_{0.01}$ \\
 & GReaT & 67.14$_{0.31}$ & \textbf{0.88}$_{0.01}$ & \textbf{0.94}$_{0.00}$ & 24.04$_{0.14}$ & \textbf{0.90}$_{0.00}$ & \textbf{0.84}$_{0.01}$ & 65.76$_{0.33}$ & \textbf{0.82}$_{0.02}$ & 0.35$_{0.42}$ \\
 & Harpoon & \textbf{3.04}$_{0.07}$ & \underline{0.87}$_{0.01}$ & \underline{0.93}$_{0.00}$ & \textbf{4.23}$_{0.35}$ & 0.65$_{0.01}$ & 0.81$_{0.00}$ & \textbf{20.20}$_{1.26}$ & 0.47$_{0.01}$ & \textbf{0.89}$_{0.01}$ \\
\bottomrule
\end{tabular}}

\end{table}

%% file: ICLR_2025_Template/tables/ablationlosscombined.tex
\begin{table}[t]
\centering
\caption{Harpoon's imputation performance (MAR) under different inference-time losses using a model pre-trained on MSE loss. }
\label{tab:lossperformance}
\resizebox{.90\textwidth}{!}{
\begin{tabular}{llll>{\columncolor{lightgray}}l>{\columncolor{lightgray}}lll}

\toprule
\multirow{2}{*}{Ratio} & \multirow{2}{*}{Loss} & \multicolumn{2}{c}{Adult} & \multicolumn{2}{c}{Default } & \multicolumn{2}{c}{Shoppers} \\
\cline{3-8}
&      & MSE ($\downarrow$) & Acc. ($\uparrow$) & MSE ($\downarrow$)& Acc. ($\uparrow$)& MSE ($\downarrow$)& Acc. ($\uparrow$)\\
\midrule
\multirow{4}{*}{0.25} & MAE & \textbf{0.99}$_{0.08}$ & \textbf{69.44}$_{3.11}$ & \textbf{0.51}$_{0.06}$ & \textbf{73.49}$_{2.53}$ & \textbf{0.73}$_{0.06}$ & 51.17$_{5.20}$ \\
 & MSE & 1.15$_{0.09}$ & 68.39$_{3.04}$ & 0.68$_{0.10}$ & 71.57$_{2.16}$ & 0.86$_{0.11}$ & \textbf{51.51}$_{5.32}$ \\
 & MAE + CE & \underline{1.05}$_{0.10}$ & \underline{69.30}$_{3.02}$ & \underline{0.54}$_{0.07}$ & \underline{73.01}$_{2.48}$ & \underline{0.77}$_{0.08}$ & 51.13$_{5.35}$ \\
 & MSE + CE & 1.31$_{0.13}$ & 66.62$_{2.90}$ & 0.77$_{0.12}$ & 71.39$_{2.45}$ & 1.04$_{0.16}$ & \underline{51.27}$_{5.28}$ \\\midrule
\multirow{4}{*}{0.50} & MAE & \textbf{1.08}$_{0.06}$ & \textbf{64.80}$_{3.29}$ & \textbf{0.53}$_{0.06}$ & \textbf{69.55}$_{1.91}$ & \textbf{0.81}$_{0.06}$ & \textbf{48.39}$_{5.00}$ \\
 & MSE & 1.19$_{0.07}$ & 63.28$_{3.07}$ & 0.67$_{0.08}$ & 67.60$_{1.76}$ & 0.90$_{0.10}$ & \underline{48.24}$_{4.92}$ \\
 & MAE + CE & \underline{1.14}$_{0.08}$ & \underline{64.56}$_{3.15}$ & \underline{0.55}$_{0.07}$ & \underline{69.26}$_{1.90}$ & \underline{0.85}$_{0.06}$ & 48.05$_{4.98}$ \\
 & MSE + CE & 1.32$_{0.08}$ & 62.17$_{2.90}$ & 0.73$_{0.09}$ & 67.72$_{1.76}$ & 1.07$_{0.12}$ & 48.04$_{5.06}$ \\\midrule
\multirow{4}{*}{0.75} & MAE & \textbf{1.26}$_{0.07}$ & \textbf{58.81}$_{3.85}$ & \textbf{0.72}$_{0.06}$ & \textbf{61.48}$_{3.83}$ & \textbf{1.10}$_{0.05}$ & \textbf{48.37}$_{4.39}$ \\
 & MSE & 1.38$_{0.06}$ & 57.18$_{3.41}$ & 0.88$_{0.07}$ & 58.81$_{3.93}$ & 1.17$_{0.05}$ & 47.76$_{4.40}$ \\
 & MAE + CE & \underline{1.30}$_{0.06}$ & \underline{58.73}$_{3.83}$ & \underline{0.73}$_{0.07}$ & \underline{61.38}$_{3.85}$ & \underline{1.15}$_{0.04}$ & \underline{48.12}$_{4.44}$ \\
 & MSE + CE & 1.44$_{0.04}$ & 57.20$_{3.39}$ & 0.91$_{0.09}$ & 59.63$_{4.11}$ & 1.29$_{0.05}$ & 47.68$_{4.56}$ \\
\bottomrule
\end{tabular}}
\end{table}

%% file: ICLR_2025_Template/tables/ablation_guidance_schedule.tex
\begin{table}[t]
\centering
\caption{{Linear vs. fixed guidance strength across denoising steps (MAR).}}
\label{tab:schedule}
\resizebox{0.90\textwidth}{!}{%
\begin{tabular}{l l c c >{\columncolor{lightgray}}c >{\columncolor{lightgray}}c c c}
\toprule
 & & \multicolumn{2}{c}{Adult} & \multicolumn{2}{c}{Default} & \multicolumn{2}{c}{Shoppers} \\
 \cline{3-8}
Ratio & Type & MSE ($\downarrow$) & Acc. ($\uparrow$) & MSE ($\downarrow$) & Acc. ($\uparrow$) & MSE ($\downarrow$) & Acc. ($\uparrow$) \\
\midrule
\multirow{2}{*}{ 0.25}
 &  Linear  & \textbf{0.99}$_{0.08}$ & 68.91$_{2.92}$ & \textbf{0.51}$_{0.06}$ & 73.21$_{2.30}$ & \textbf{0.68}$_{0.07}$ & 50.70$_{5.64}$ \\
 &  Fixed  & \textbf{0.99}$_{0.08}$ & \textbf{69.44}$_{3.11}$ & \textbf{0.51}$_{0.06}$ & \textbf{73.49}$_{2.53}$ & 0.73$_{0.06}$ & \textbf{51.17}$_{5.20}$ \\
\midrule
\multirow{2}{*}{ 0.50}
 &  Linear  & 1.09$_{0.07}$ & 63.46$_{3.19}$ & 0.54$_{0.05}$ & 67.16$_{2.33}$ & \textbf{0.80}$_{0.08}$ & 47.30$_{5.33}$ \\
 &  Fixed  & \textbf{1.08}$_{0.06}$ & \textbf{64.80}$_{3.29}$ & \textbf{0.53}$_{0.06}$ & \textbf{69.55}$_{1.91}$ & 0.81$_{0.06}$ & \textbf{48.39}$_{5.00}$ \\
\midrule
\multirow{2}{*}{ 0.75}
 &  Linear  & 1.37$_{0.07}$ & 55.49$_{3.39}$ & 0.78$_{0.07}$ & 54.64$_{4.29}$ & 1.16$_{0.07}$ & 46.70$_{4.76}$ \\
 &  Fixed  & \textbf{1.26}$_{0.07}$ & \textbf{58.81}$_{3.85}$ & \textbf{0.72}$_{0.06}$ & \textbf{61.48}$_{3.83}$ & \textbf{1.10}$_{0.05}$ & \textbf{48.37}$_{4.39}$ \\
\bottomrule
\end{tabular}
}
\end{table}

%% file: ICLR_2025_Template/tables/runtimes_small.tex
\begin{wraptable}{r}{0.45\textwidth}

\caption{Sampling runtimes (seconds) of methods (MAR, 0.25 mask ratio).}
\centering
\resizebox{.98\linewidth}{!}{%
\begin{tabular}{lccc}
\toprule
Method & Adult & Default & Gesture \\
\midrule
GAIN &  0.14$_{ 0.26}$ &  0.15$_{ 0.28}$ &  0.14$_{ 0.27}$ \\
Miracle &  33.03$_{ 0.99}$ &  44.54$_{ 0.29}$ &  23.09$_{ 0.13}$ \\
GReaT &  65.47$_{ 2.57}$ &  202.71$_{ 7.44}$ &  131.72$_{ 1.96}$ \\
Remasker &  2.68$_{ 5.24}$ &  0.33$_{ 0.53}$ &  0.30$_{ 0.54}$ \\
DiffPuter &  2.83$_{ 0.27}$ &  2.62$_{ 0.26}$ &  0.98$_{ 0.27}$ \\
Harpoon &  4.43$_{ 0.34}$ &  4.05$_{ 0.34}$ &  1.54$_{ 0.35}$ \\
\bottomrule
\end{tabular}%
}

\label{tab:runtimessmall}
\end{wraptable}

%% file: ICLR_2025_Template/chapters/conclusions.tex
\section{Conclusion}
We propose \textsc{Harpoon}, a tabular diffusion method for guiding generation to meet diverse constraints. With a geometric perspective on tabular diffusion, we prove that denoisers act as orthogonal projectors, guaranteeing the tangent-space behaviour of inference-time gradients. This allows training models with simple losses, while allowing for task-specific objectives to be applied at inference.

\textsc{Harpoon’s} unlocks practical benefits for real-world tabular modelling. It handles imputation, inequality constraints, and heterogeneous feature types without special-case training procedures. Its manifold-guided updates provide a general guidance mechanism for enforcing constraints at inference, making generation robust and adaptable across diverse tasks. We empirically validate our theoretical findings and show \textsc{Harpoon's} adaptability in handling diverse constraints. 

\textbf{Limitations and Future Work.} We model tabular diffusion as a continuous process to fit existing manifold theories. However, extending the theory itself to discrete noising mechanisms \citep{kotelnikov2023tabddpm,shi2024tabdiff} would broaden applicability to other diffusion designs. Beyond tables, applying inference-time guidance to other modalities is a natural next step for improvement.

Overall, our work introduces a new design principle for conditional tabular diffusion, bridging the gap between theory and practice: \textit{keep training simple, scale at inference}.

%% file: ICLR_2025_Template/appendix/A1_symbols.tex
\section{Notations}
\label{app:notations}
\begin{table}[!htb]
\centering
\caption{Summary of key notations.} 
\label{tab:symbols-theory-grouped}
\resizebox{\textwidth}{!}{%
\begin{tabular}{ll}
\toprule
\textbf{Symbol} & \textbf{Explanation} \\
\midrule
\multicolumn{2}{l}{\textbf{(A) Data \& Manifold Setup}} \\
$d$ & Ambient space dimension\\
$x_0 \in \mathbb{R}^d$ & Clean (un-noised) sample \\
$p(x_0)$ & Clean data distribution\\
$\mathcal{M}_0 \subset \mathbb{R}^d$ & Smooth, connected data manifold \\
$n$ & Intrinsic dimension of $\mathcal{M}_0$ with $n \ll d$ \\
$\varphi:\mathbb{R}^n \to \mathbb{R}^d$ & Smooth manifold embedding function (Assump.~\ref{assump:manifold}) \\
$T_{x_0}\mathcal{M}_0$ & Tangent space of $\mathcal{M}_0$ at $x_0$ \\
$B_r(x_0)=\{x:\|x-x_0\|_2<r\}$ & Local hypersphere of radius $r$ about $x_0$ (Assump.~\ref{assump:locallinear}) \\
\addlinespace
\multicolumn{2}{l}{\textbf{(B) Forward Diffusion \& Noisy Shells}} \\
$t \in \{1,\dots,T\}$ & Diffusion step index; $T$ total steps \\
$\alpha_t$ & Noise schedule at step $t$ \\
$\bar{\alpha}_t=\prod_{s=1}^t \alpha_s$ & Cumulative noise schedule \\
$x_t$ & Noisy sample at diffusion step $t$\\
$\epsilon$ & Sampled white noise\\
$\mathcal{M}_t$ & Noisy shell manifold at step $t$ (Prop.~\ref{prop:noisy}) \\
\addlinespace
\multicolumn{2}{l}{\textbf{(C) Denoiser \& Training Objective}} \\
$\epsilon_\theta(x_t,t)$ & Denoiser's noise estimate for sample at step $t$ \\
$\mathcal{L}_{\mathrm{train}}$ & MSE training loss eq.~\ref{eq:denoiseobj}\\
\addlinespace
\multicolumn{2}{l}{\textbf{(D) Projection \& Dirty Estimate}} \\
$Q_t(.)$ & Dirty estimate mapping function (Def.~\ref{def:Qi}) \\
$\hat{x}_0=Q_t(.)$ & Dirty estimate point on $\mathcal{M}_0$ \\
$\pi(x_t)$ & Orthogonal projection of $x_t$ onto $\mathcal{M}_0$ (Thm.~\ref{thm:denoiser_projection}) \\
\addlinespace
\multicolumn{2}{l}{\textbf{(E) Guidance \& Tangency}} \\
$c \in \mathcal{C}$ & Arbitrary conditioning information \\
$\mathcal{L}_{\mathrm{inf}}(.)$ & Arbitrary differentiable inference-time loss \\
\addlinespace
\multicolumn{2}{l}{\textbf{(F) \textsc{Harpoon} Sampling (Alg.~\ref{alg:harpoon})}} \\
$z$ & Sampled noise for stochastic denoising\\
$\sigma_t$ & Noise scale for denoising step \\
$\eta$ & Guidance step size (strength) \\
${x}_{t-1}'$ & Unconditional one-step denoised estimate \\
\bottomrule
\end{tabular}
}
\end{table}

%% file: ICLR_2025_Template/appendix/proofs.tex
\section{Proofs} 
\label{app:proof}
\textbf{Theorem 3.1} (Limiting behaviour of dirty estimates)
\textit{Let $\epsilon_\theta$ be a denoiser trained to predict Gaussian noise under the framework of 
eqs.~\ref{eq:fwdnoising} and \ref{eq:denoiseobj}, for a clean sample $x_0$.  
Then, for the mapping $Q_t$ defined in Definition~\ref{def:Qi}, we have
$\lim_{\bar{\alpha}_t \to 1} Q_t(x_t)=\pi(x_t)$,
where $\pi(x_t)$ denotes the orthogonal projection of $x_t$ onto the data manifold $\mathcal{M}_0$.}

\textbf{Proof.}  
Consider the training objective for a denoiser with parameters $\theta$  under a continuous Gaussian noising framework \eqref{eq:denoiseobj}:
\[
\mathcal{L}_\mathrm{train} = \mathbb{E}_{x_0\sim p(x_0), t\sim U[1,T], \epsilon\sim \mathcal{N}(0,I)} \big[ \| \epsilon_\theta(x_t,t) - \epsilon \|_2^2 \big].
\]
Substituting \( \epsilon \) from \eqref{eq:fwdnoising}, we get:

\[
\mathcal{L}_\mathrm{train} = \mathbb{E}_{x_0, t, \epsilon} \left[ \left\| \epsilon_\theta(x_t, t) - \frac{x_t - \sqrt{\bar{\alpha_t}} x_0}{\sqrt{1 - \bar{\alpha_t}}} \right\|_2^2 \right].
\]

\[
\mathcal{L}_\mathrm{train} = \int_{x_0} \int_{x_t} \int_t \left\| \epsilon_\theta(x_t, t) - \frac{x_t - \sqrt{\bar{\alpha_t}} x_0}{\sqrt{1 - \bar{\alpha_t}}} \right\|_2^2 p(x_t | x_0, t) p(x_0) p(t) dt dx_t dx_0  
\]

For fixed $x_t$ and $t$, \( \epsilon_\theta(x_t, t) \) is optimized by minimizing the loss, i.e., setting $\nabla_{\epsilon_\theta(x_t,t)}\mathcal{L}_\mathrm{train} = 0$:

\[
\nabla_{\epsilon_\theta(x_t,t)}\mathcal{L}_\mathrm{train} = \int_{x_0} 2 \left( \epsilon_\theta(x_t, t) - \frac{x_t - \sqrt{\bar{\alpha_t}} x_0}{\sqrt{1 - \bar{\alpha_t}}} \right) p(x_t|x_0,t)p(x_0)dx_0 = 0
\]
\[
\epsilon_\theta(x_t, t)\int_{x_0}p(x_t|x_0,t)p(x_0)dx_0 = \int_{x_0}\left (\frac{x_t - \sqrt{\bar{\alpha_t}} x_0}{\sqrt{1 - \bar{\alpha_t}}}\right )p(x_t|x_0,t)p(x_0)dx_0
\]
Dividing both sides by $\int_{x_0}p(x_t|x_0,t)p(x_0)dx_0$, we get:
\[
 \epsilon_\theta(x_t, t) =  \frac{x_t}{\sqrt{1-\bar{\alpha_t}}} - \frac{\sqrt{\bar{\alpha_t}}}{\sqrt{1-\bar{\alpha_t}}} \frac{\int_{x_0}x_0 p(x_t|x_0,t)p(x_0)dx_0}{\int_{x_0}p(x_t|x_0,t)p(x_0)dx_0}   
\]

Applying Bayes theorem, we can rewrite this as follows:
\begin{equation}
\label{eq:integral}
\epsilon_\theta(x_t, t) =  \frac{x_t}{\sqrt{1-\bar{\alpha_t}}} - \frac{\sqrt{\bar{\alpha_t}}F(x_t,t)}{\sqrt{1-\bar{\alpha_t}}}, \quad \text{where}\quad F(x_t,t) = \frac{\int_{x_0}x_0 p(x_0|x_t,t)\cancel{p(x_t)}dx_0}{\int_{x_0}p(x_0|x_t,t)\cancel{p(x_t)}dx_0} 
\end{equation}
We partition $F(x_t,t)$ into contributions from points $x_0$ in a region 
$E \subset \mathcal{M}_0$, chosen inside a ball of radius $\varepsilon \in (0, \|x_t - \pi(x_t)\|_2)$ around the orthogonal projection $\pi(x_t)$, 
and its complement $\mathcal{M}_0 \setminus E$~\citep{stanczuk2024diffusion}.  
Under the Gaussian noising framework, the conditional distribution satisfies 
$p(x_0|x_t, t) \sim \mathcal{N}(x_0 | x_t, \sigma_t^2 I)$.

\[F(x_t,t) =  \frac{\int_Ex_0.\mathcal{N}(x_0 | x_t, \sigma_t^2 I) \, dx_0}{\int_{\mathcal{M}_0}\mathcal{N}(x_0 | x_t, \sigma_t^2 I) \, dx_0} + \frac{\int_{\mathcal{M}_0 \setminus E} x_0.\mathcal{N}(x_0 | x_t, \sigma_t^2 I) \, dx_0}{\int_{\mathcal{M}_0}\mathcal{N}(x_0 | x_t, \sigma_t^2 I) \, dx_0}
\]
We then divide both the numerator and denominator by \( \int_E \mathcal{N}(x_0 | x_t, \sigma_t^2 I) \, dx_0 \):

\[
F(x_t,t)
= \frac{\frac{\int_E x_0 \cdot \mathcal{N}(x_0 | x_t, \sigma_t^2 I) \, dx_0}{\int_E \mathcal{N}(x_0 | x_t, \sigma_t^2 I) \, dx_0} + \frac{\int_{\mathcal{M}_0 \setminus E} x_0 \cdot \mathcal{N}(x_0 | x_t, \sigma_t^2 I) \, dx_0}{\int_E \mathcal{N}(x_0 | x_t, \sigma_t^2 I) \, dx_0}}
{\frac{\int_E \mathcal{N}(x_0 | x_t, \sigma_t^2 I) \, dx_0}{\int_E \mathcal{N}(x_0 | x_t, \sigma_t^2 I) \, dx_0} + \frac{\int_{\mathcal{M}_0 \setminus E} \mathcal{N}(x_0 | x_t, \sigma_t^2 I) \, dx_0}{\int_E \mathcal{N}(x_0 | x_t, \sigma_t^2 I) \, dx_0}}.
\]
Applying \textbf{Lemma D.9}~\citep{stanczuk2024diffusion}, for a general Gaussian distribution, the contributions from \( \mathcal{M}_0 \setminus E \) become negligibly small as \( \sigma_t \to 0 \). This simplifies the expression to:

\[
\lim_{\sigma_t\to0}F(x_t,t) = \frac{\int_E x_0 \cdot \mathcal{N}(x_0 | x_t, \sigma_t^2 I) \, dx_0}{\int_E \mathcal{N}(x_0 | x_t, \sigma_t^2 I) \, dx_0}.
\]

Subtracting \( \pi(x) \) from both sides, we get:

\[
\lim_{\sigma_t\to0}F(x_t,t) - \pi(x_t) = \frac{\int_E (x_0 - \pi(x_t)) \cdot \mathcal{N}(x_0 | x_t, \sigma_t^2 I) \, dx_0}{\int_E \mathcal{N}(x_0 | x_t, \sigma_t^2 I) \, dx_0}.
\]

Since \( \| x_0 - \pi(x_t) \| \leq \varepsilon \) for all \( x_0 \in E \), we can bound the difference:
\[
0\leq ||F(x_t,t) - \pi(x_t)|| \leq \varepsilon \cdot\cancel{\frac{\int_E  \mathcal{N}(x_0 | x_t, \sigma_t^2 I) \, dx_0}{\int_E \mathcal{N}(x_0 | x_t, \sigma_t^2 I) \, dx_0}}.
\]

As $\varepsilon$ is a user-defined parameter that can be set arbitrarily small when constructing \(E\), we get:

\[
 \lim_{\sigma_t\to0}F(x_t,t) = \pi(x_t).
\]

Upon substituting this result for $F(x_t,t)$ in \eqref{eq:integral}, we get:
\[
\lim_{\sigma_t\to0}\epsilon_\theta(x_t, t) =  \frac{x_t}{\sqrt{1-\bar{\alpha_t}}} - \frac{\sqrt{\bar{\alpha}_t}}{\sqrt{1-\bar{\alpha_t}}} \pi(x_t)
\]
Since $\sigma_t^2 = 1-\bar{\alpha_t}$, we have:
\[
\lim_{\bar{\alpha}_t\to1} Q_t(x_t) =   \frac{x_t}{\sqrt{\bar{\alpha_t}}} - \frac{\sqrt{1-\bar{\alpha_t}}\epsilon_\theta(x_t, t)}{\sqrt{\bar{\alpha_t}}} = \pi(x_t)
\] 

Hence $Q_t(x_t)$ points to the orthogonal projection on the data manifold $\mathcal{M}_0$.

\textbf{Key difference from~\cite{chung2022improving}:} While our result is similar to Proposition 2 from~\cite{chung2022improving}, the key difference lies in how we treat the integral in \eqref{eq:integral}. ~\cite{chung2022improving} assume this integral behaves as a weighted average over all points on the manifold, relying on the symmetry of the normal distribution to argue that the denoised estimate points to the orthogonal projection. However, this assumption requires the manifold to be globally linear, not just locally linear, since the integral spans the entire manifold. In contrast, our proof makes no such assumption on the manifold's geometry, generalizing the result to non-linear manifolds.

\textbf{Theorem 3.2} (Manifold-constrained inference-time gradients)
\textit{Let $c$ denote arbitrary conditioning information and assume $Q_t$ from Definition~\ref{def:Qi} acts as an orthogonal projection onto the data manifold $\mathcal{M}_0$ at $\hat{x}_0 = Q_t(x_t)$. Then, for any differentiable inference-time loss function 
$\mathcal{L}_{\mathrm{inf}}:\mathbb{R}^d \times \mathcal{C} \to \mathbb{R}$, defined by $\mathcal{L}_{\mathrm{inf}}(\hat{x}_0; c)$,  
the gradient lies in the tangent space of $\mathcal{M}_0$ at $\hat{x}_0$, i.e., $\nabla_{x_t}\mathcal{L}_{\mathrm{inf}}(\hat{x}_0, c) \in T_{\hat{x}_0}\mathcal{M}_0$.}

\textbf{Proof.}  
Let \(v \in \mathbb{R}^d\) be an arbitrary vector. We decompose it as
\(
v = v_T + v_N,
\)
where \(v_T \in {T}_{Q_t(x_t)}\mathcal{M}_0\) is the tangential component and 
\(v_N \perp T_{Q_t(x_t)}\mathcal{M}_0\) is the normal component.  

By Assumption~\ref{assump:locallinear}, the local behaviour of $Q_t$ satisfies the following:
\[
\lim_{s\to0} Q_t(x_t + s v) = \lim_{s\to0}(Q_t(x_t) + s v_T)
\]
Taking the partial derivative with respect to \(s\) and evaluating at \(s=0\) gives the following:
\[
\frac{\partial}{\partial s}\,Q_t(x_t+s v)\Big|_{s=0}
= \frac{\partial}{\partial s}\big(Q_t(x_t)+s v_T\big)\Big|_{s=0}
\]
The left-hand side is \(J_{Q_t}v\) (where $J_{Q_t}$ is the Jacobian), so the expression evaluates to the following:
\begin{equation}
\label{eq:jacobiantangent}
J_{Q_t}v = v_T. 
\end{equation}

Now, apply the chain rule for a differentiable loss $\mathcal{L}_\mathrm{inf}(\hat{x}_0, c)$:
\[
\nabla_{x_t}\mathcal{L}_\mathrm{inf}(\hat{x}_0, c) = 
\frac{\partial \mathcal{L}_\mathrm{inf}(\hat{x}_0, c)}{\partial x_t} = \frac{\partial \mathcal{L}_\mathrm{inf}(Q_t(x_t), c)}{\partial x_t}
\]
\[
\nabla_{x_t}\mathcal{L}_\mathrm{inf}(\hat{x}_0, c) = {J_{Q_t}^T} \times \frac{\partial \mathcal{L}_\mathrm{inf}}{\partial \hat{x}_0}
\]
\begin{equation}
\label{eq:nabla}
\nabla_{x_t}\mathcal{L}_\mathrm{inf}(\hat{x}_0, c) = {J_{Q_t}^T}w \quad \text{(since $\frac{\partial \mathcal{L}_\mathrm{inf}}{\partial \hat{x}_0} =$ some $ w \in \mathbb{R}^d$ )}
\end{equation}

From \eqref{eq:jacobiantangent}, we have:
\[
v^TJ_{Q_t}^T = v_T^T
\]
\[\therefore
v^T J_{Q_t}^Tw = v_T^Tw
\]
\[
v_N^TJ_{Q_t}^Tw + v_T^TJ_{Q_t}^Tw = v_T^Tw
\]
\[
v_N^TJ_{Q_t}^Tw + v_T^TJ_{Q_t}^Tw = v_T^Tw_T + v_T^Tw_N
\]
The term $v_T^Tw_N=0$ since the vectors are perpendicular to one another, yielding:
\[
v_N^TJ_{Q_t}^Tw + v_T^TJ_{Q_t}^Tw = v_T^Tw_T
\]
Upon matching terms, since $v_N$ and $v_T$ are perpendicular to one another, we have the following:
\[
J_{Q_t}^Tw = w_T
\]
Substituting this in \eqref{eq:nabla}, we get the final result:
\[
\nabla_{x_t}\mathcal{L}_\mathrm{inf}(\hat{x}_0, c) = w_T \in T_{Q_t(x_t)}\mathcal{M}_0
\]

%% file: ICLR_2025_Template/appendix/A3_datasets.tex
\section{Datasets}\label{app: datasets}

We used eight widely adopted benchmark datasets: 
\textbf{Adult}\footnote{\url{https://archive.ics.uci.edu/static/public/2/adult.zip}}, 
\textbf{Default} of Credit Card Clients\footnote{ \url{https://archive.ics.uci.edu/static/public/350/default+of+credit+card+clients.zip}}, 
\textbf{Magic} Gamma Telescope\footnote{\small \url{https://archive.ics.uci.edu/static/public/159/magic+gamma+telescope.zip}}, 
Online \textbf{Shoppers} Purchasing Intention\footnote{\url{https://archive.ics.uci.edu/static/public/468/online+shoppers+purchasing+intention+dataset.zip}}, \textbf{Gesture} Phase Segmentation\footnote{\url{https://archive.ics.uci.edu/static/public/302/gesture+phase+segmentation.zip}}, 
\textbf{Letter} Recognition\footnote{\url{https://archive.ics.uci.edu/static/public/59/letter+recognition.zip}}, 
and Dry \textbf{Bean}\footnote{\url{https://archive.ics.uci.edu/static/public/602/dry+bean+dataset.zip}}.  
Additionally, we included the \textbf{California Housing} dataset from Kaggle\footnote{\url{https://www.kaggle.com/datasets/camnugent/california-housing-prices}}. The statistics of these datasets are presented in Table~\ref{tab:datasets}, each of which is partitioned using a random 70-30 train-test split. Prior to splitting the datasets, we filtered out the rows having missing values, similar to~\cite{zhang2025diffputer}. We also remove the target column from training and evaluation to avoid leaking label-feature correlations, following~\cite{zhang2025diffputer}.

\begin{table}[!h]
    \centering
    \caption{Statistics of datasets. \# Num stands for the number of numerical columns, and \# Cat stands for the number of categorical columns.}
    \label{tab:datasets}
    \resizebox{\textwidth}{!}{%
        \begin{tabular}{lrrrrr}
        \toprule
        \textbf{Dataset} & \textbf{\# Rows} & \textbf{\# Num} & \textbf{\# Cat} & \textbf{\# Train (In-sample)} & \textbf{\# Test (Out-of-Sample)} \\
        \midrule
        \textbf{California} Housing & 20,640 & 9  & -- & 14,447 & 6,193 \\
        \textbf{Letter} Recognition & 20,000 & 16 & -- & 14,000 & 6,000 \\
        \textbf{Gesture} Phase Segmentation & 9,522 & 32 & -- & 6,665 & 2,857 \\
        \textbf{Magic} Gamma Telescope & 19,020 & 10 & -- & 13,314 & 5,706 \\
        Dry \textbf{Bean} & 13,610 & 17 & -- & 9,527 & 4,083 \\
        \midrule
        \textbf{Adult} Income & 32,561 & 6  & 8  & 22,792 & 9,769 \\
        \textbf{Default} of Credit Card Clients & 30,000 & 14 & 10 & 21,000 & 9,000 \\
        Online \textbf{Shoppers} Purchase & 12,330 & 10 & 7  & 8,631 & 3,699 \\
        \bottomrule
        \end{tabular}
    }
\end{table}

%% file: ICLR_2025_Template/appendix/appendix-tabulartasks.tex
\section{Tabular conditions: Imputation and inequality constraints}
\label{app:taskdefs}
While the manifold assumptions describe the geometry of noisy data, our ultimate goal is to do \emph{conditional tabular generation}.
Such conditions arise in several forms:

\textbf{Imputation under masks.}
A binary mask $m \in \{0,1\}^d$ specifies observed ($m=0$) and missing/masked ($m=1$) entries for a ground-truth sample $x_0 \in \mathcal{M}_0$.
The conditional target is given by $(1-m) \odot x_0$, i.e., the subset of ground truth values
to reconstruct. We can measure the likelihood of a denoised estimate ($\hat{x}_0$) adhering to the conditions through a reconstruction loss such as
\begin{equation}
\label{eq:imputationloss}
\mathcal{L}_{\text{imp}}(\hat{x}_0, x_0, m)
= \|(1-m) \odot (\hat{x}_0 - x_0)\|_p,
\end{equation}
for $p \in \{1,2\}$, corresponding to the Mean Absolute Error (MAE) or MSE, respectively.

\textbf{Inequality constraints.}  
Beyond imputation tasks, conditions can also impose inequality constraints:
\begin{equation}
\label{eq:generalconstraints}
\min_{\hat{x}_0 \in \mathbb{R}^d} f(\hat{x}_0) 
\quad \text{s.t.} \quad 
\ell \leq g(\hat{x}_0) \leq u, \;\; h(\hat{x}_0) = v,
\end{equation}
where $\ell,u$ are elementwise lower and upper bounds (e.g., \texttt{Age} $\in [0,120]$, \texttt{Salary} $\geq 0$), $g(\cdot)$ allows transformations or subset selection. The function $h(\cdot)$ could be a selector that enforces features to match a particular discrete or continuous value $v$.
Feasibility can be approximated using soft penalties as proposed by \citep{donti2021}, which we can write in a more general form as follows:
\begin{equation}
\label{eq:inequality}
\mathcal{L}_{\text{soft}}(\hat{x}_0) =
f(\hat{x}_0)
+ \lambda_g\Big(\|\mathrm{ReLU}(\ell - g(\hat{x}_0))\|_i
+ \|\mathrm{ReLU}(g(\hat{x}_0) - u)\|_j \Big)
+ \lambda_h \|h(\hat{x}_0) - v\|_k,
\end{equation}
where $i,j,k$ are suitably chosen norms (e.g., 1, 2, or $\infty$). While useful in principle, the norm is to incorporate these constraints directly as a training-time objective for a model~\citep{donti2021}. This practice presents two limitations: (i) some constraints may correspond to rare events or infrequent classes, making it difficult to get enough samples for reliable training, and (ii) constraints may change at inference (e.g., $\texttt{Age} > 30$ replaced by $\texttt{Age} < 15$), necessitating retraining.  

We see that different conditional tasks naturally correspond to different optimisation objectives at inference time, motivating the central question of our work. 

{\textbf{Conjunctions and disjunctions of conditions.}  
Equation~\ref{eq:inequality} naturally handles conjunctions (\textsc{and}) of multiple constraints since the overall loss drops to zero when all individual constraints are satisfied. To handle disjunctions (\textsc{or}), we modify the loss by taking the \emph{product} of individual constraint violations. For instance, for a disjunction such as $\hat{x}_0 > a$ or $\hat{x}_0 < b$, we can formulate it as
\begin{equation}
\label{eq:disjunc}
\mathcal{L}_{\text{disj}}(\hat{x}_0) = ||\mathrm{ReLU}(a - 
\hat{x}_0)|| \cdot|| \mathrm{ReLU}(\hat{x}_0 - b)||,
\end{equation}
which drops to zero if either condition is satisfied. } 

{Interestingly, disjunctive constraints can correspond to \textit{disconnected sub-manifolds} of the feasible region. As \autoref{fig:disconnected} illustrates, the model may end up favouring one condition over the other depending on which feasible region covers a larger portion of the data manifold, or which region the noisy point $x_t$ is closer to during denoising.}

\begin{figure}
    \centering
    \includegraphics[width=0.5\linewidth]{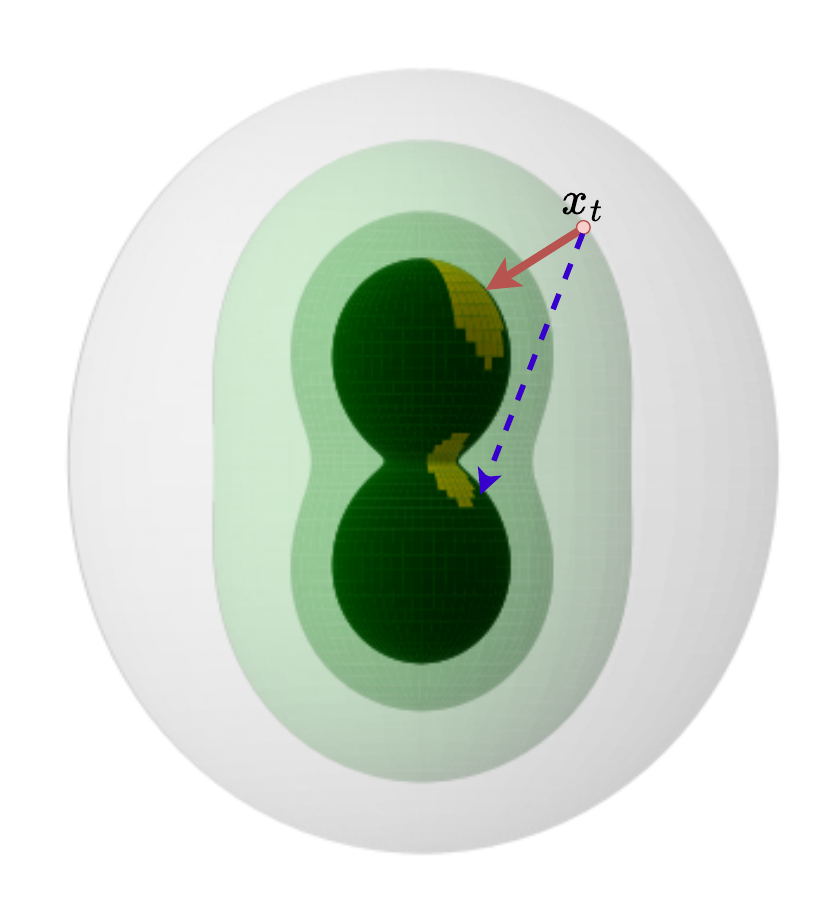}
    \caption{{Disconnected submanifolds under disjunctive constraints, e.g. (\emph{colour}=\textcolor{red}{red} \textsc{or} \emph{colour}=\textcolor{blue}{blue}). Point $x_t$ likely favours the region indicated by red arrow due to the proximity and larger feasible area imposed by the constraint.}}
    \label{fig:disconnected}
\end{figure}


%% file: ICLR_2025_Template/appendix/appendix_tasks.tex
\section{Experimental Configuration of Tabular Conditional Tasks}
\label{app:task_config}

We describe the setup of the conditional tasks used in our experiments. We first detail the imputation tasks under different missingness mechanisms, followed by the design of inequality constraints.  

\subsection{Imputation Tasks}
For imputation, we simulate missingness patterns under three standard assumptions:

\paragraph{MCAR (Missing Completely at Random).}  
Entries are removed uniformly at random, independent of the observed or unobserved values. We implement this by masking a random subset of entries at a given missingness ratio, following~\cite{zhang2025diffputer}.  

\paragraph{MAR (Missing at Random).}  
Entries are removed conditionally on observed variables. This ensures missingness depends on observed data but not on the missing entries themselves. We implement MAR scenario following the approach in~\cite{zhang2025diffputer,zhao2023transformed}. Specifically, we keep aside a number of columns as the fixed input features and train a logistic model on these features to generate the missing values for the rest. We follow line of search to tune the bias term to get the desired degree of missingness.  

\paragraph{MNAR (Missing Not at Random).}  
Entries are removed in a way that depends on their own (unobserved) values. Following~\cite{zhang2025diffputer}, we implement this by separating the input features into two groups and generating the missing values for the second group using the logistic model, similar to the MAR tasks. We then mask out values from the first group following the MCAR setting, creating tasks that are more challenging than both MAR and MCAR.  

\subsection{Inequality Constraints}
In addition to imputation, we evaluate conditional generation under inequality constraints. We consider three types of conditions: range, categorical, and their combination.  

\paragraph{Range Constraints.}  
We first select a continuous feature at random and restrict it to a region in the tail of its empirical distribution, i.e., a region with relatively few samples in the training data.  
This design ensures that the constraint is non-trivial and represents a realistic stress test for the model.  
For evaluation, we construct a ``ground truth'' reference set by applying the same filter on the test set. We then generate an equal number of samples from the model under the specified constraint. Performance is assessed by measuring (i) the fraction of generated rows that violate the range constraint, (ii) downstream utility using an XGBoost classifier, and (iii) the \emph{$\alpha$-score}~\citep{alaa2022faithful} of the generated samples, quantifying their fidelity.  

\paragraph{Categorical Constraints.}  
For categorical constraints, we fix the value of a randomly chosen discrete feature to a particular class, deliberately avoiding the majority class from the training data. As with range constraints, we construct a ground truth by filtering the test set for rows that satisfy the categorical restriction, and we generate an equal number of samples under this condition. Evaluation follows the same protocol: violation rate and alpha-precision relative to the ground truth.  

\paragraph{{Conjunctive Constraints.}}  
We also consider conjunctions, where we consider tasks where both range and categorical constraints are applied jointly. In this case, any generated row that violates at least one of the specified constraints is counted as a violation. Ground truth samples are constructed by filtering the test set with both conditions applied simultaneously.  
{
\paragraph{Disjunctive Constraints.}Finally, we evaluate disjunctive (\textsc{or}) constraints, where a generated row only needs to satisfy \emph{at least one} of multiple specified conditions. For example, a constraint may require that either a continuous feature $A$ exceeds a threshold $a$, or a categorical feature $B$ takes a particular value $b$. Ground truth samples are constructed by filtering the test set for rows that satisfy \emph{any} of the specified conditions. Such constraints can result in disconnected submanifolds, as illustrated in \autoref{fig:disconnected}, since each constraint likely spans a different feasible region on the data manifold.}

The specific values and classes chosen for constraints, along with the proportion of training samples covering them, are provided in Table~\ref{tab:constraints}, demonstrating that these setups constitute a challenging stress cases rather than trivial ones.
\input{ICLR_2025_Template/tables/constraints}

%% file: ICLR_2025_Template/tables/constraints.tex
\begin{table}[t]
\centering
\caption{Constraint specifications for range, categorical, conjunctive, and {disjunctive} tasks across datasets, together with the proportion of samples seen during training satisfying each constraint.}
\label{tab:constraints}
\resizebox{\textwidth}{!}{
\begin{tabular}{llllc}
\toprule
Dataset & Constraint Type & Feature(s) & Restriction & Train Set Proportion (\%) \\
\midrule
\multirow{4}{*}{Adult} 
  & Range       & Age & $\geq 50$ & 21.73 \\
  & Categorical & Workclass & = State-gov & 3.9 \\
  & Conjunctive        & Age, Workclass & $\geq 50$; State-gov & 0.87 \\
  & {Disjunctive}        & {Age, Workclass} & {$\geq 50$ or State-gov} & {24.83}\\
\midrule
\multirow{4}{*}{Default} 
  & Range       & AGE & $\geq 50$ & 8.9 \\
  & Categorical & EDUCATION & = High School & 16.37 \\
  & Conjunctive        & AGE, EDUCATION & $\geq 50$;  High School & 3.4 \\
  & {Disjunctive}        & {AGE, EDUCATION} & {$\geq 50$ or High School} & {21.88} \\
\midrule
\multirow{4}{*}{Shoppers} 
  & Range       & Administrative & $\geq 4$ & 25.36 \\
  & Categorical & VisitorType & = New\_Visitor & 14.25 \\
  & Conjunctive        & Administrative, VisitorType & $\geq 4$; New\_Visitor & 4.19 \\
  & {Disjunctive}        & {Administrative, VisitorType} & {$\geq 4$ or New\_Visitor} & {35.42} \\
\bottomrule
\end{tabular}}
\end{table}

%% file: ICLR_2025_Template/appendix/appendix-metrics.tex
\section{{Evaluation Metrics}}
\label{sec:metrics}

{In addition to standard imputation metrics such as MSE and accuracy, we assess the quality of generated tabular data using a combination of fidelity, privacy, and utility metrics. We implement all our metrics using the code available at SynthCity: \url{https://github.com/vanderschaarlab/synthcity}.}

{\textbf{Fidelity} quantifies how well the generated samples match the distribution of the real data. We use the following:}
\begin{itemize}
    \item \textbf{$\alpha$-score}~\citep{alaa2022faithful}: 
{The core idea is that real data naturally concentrates in a ``typical'' high-density region, with the remaining $1-\alpha$ fraction treated as outliers.
The method therefore constructs the \emph{minimum-volume $\alpha$-support} of the real distribution, i.e., the smallest region that contains the most typical fraction~$\alpha$ of real samples.
The score then measures the proportion of generated samples that fall inside this typical region. In our experiments, higher $\alpha$-scores indicate better alignment of generated samples with the high-density core of the real data, reflecting strong sample fidelity.}
    \item \textbf{KS score} (Kolmogorov–Smirnov test): {The Kolmogorov–Smirnov (KS) score measures the maximum difference between the cumulative distribution functions (CDFs) of real and synthetic samples. Intuitively, it quantifies how closely the synthetic data reproduces the marginal distribution of each feature. In the SynthCity implementation, higher KS scores indicate stronger similarity between the real and synthetic distributions, while lower scores indicate substantial divergence.}
    \item {\textbf{Detection score}: The detection score measures how easily a classifier can distinguish real data from synthetic data. We train a linear classifier on the pooled dataset containing both real and synthetic samples, using K-fold cross-validation to estimate robustness. The metric reports the average AUC-ROC obtained when predicting whether a sample is real or synthetic. Lower scores (close to zero) indicate that the sets are indistinguishable.}
\end{itemize}

{\textbf{Privacy} metrics assess the risk of identifying real data points from the synthetic dataset. We use the \textbf{Identifiability} metric, which estimates how likely it is that a synthetic sample is ``{too similar}'' to a real one, indicating potential privacy leakage. Intuitively, the metric checks whether any synthetic record lies \emph{closer} to a real sample than the distance between real samples with each other. If so, the synthetic sample may inadvertently reveal an individual. Higher scores indicate a greater risk of recovering real data~\citep{adsgan}.}

{\textbf{Utility} quantifies how useful the generated data is for downstream tasks. We train an XGBoost classifier on the generated dataset and evaluate its accuracy on the real test set. This reflects whether the synthetic data preserves predictive relationships present in the original dataset. In our experiments, we predict the gender, the marital status, and the operating system as the targets, for Adult, Default, and Shoppers respectively.}

%% file: ICLR_2025_Template/appendix/appendix-baselines.tex
\section{Baselines and Implementation details}
\label{sec:baselines}
\subsection{Non-diffusion baselines}
\label{sec:nondiff}
We implement most of the baseline methods according to the publicly available codebases. 

\begin{itemize}
    \item Remasker~\citep{du2024remasker}: \small \url{https://github.com/tydusky/remasker}. The framework automatically masks a fraction of values during training (50\%), which we leave unchanged.
    \item MIRACLE~\citep{kyono2021miracle} and GAIN~\citep{yoon2018gain}: \small \url{https://github.com/vanderschaarlab/hyperimpute}. For GAIN, we updated the plugin code to separate pre-training from inference-time masking. Following the paper~\citep{yoon2018gain}, we pre-train the model using 20\% missing data under an MCAR mask, and evaluate at inference on various masks.
    \item GReaT~\citep{borisovlanguage}: \url{https://pypi.org/project/be-great/}. Since the model sometimes fails to generate all missing in one go, we give it up to five retries to fill in the whole table. If it still fails to generate some values, we replace them with the mean for numerical columns. Categorical columns were fixed using random replacement.
\end{itemize}
All the above baselines (except for GReaT) require tabular data consisting of purely numerical inputs, so we use integer encoding to represent discrete features as numbers. The train and test splits are then standardized using the mean and standard deviation of the training set. For GReaT, no special encoding is required as the tokenizer automatically creates a suitable representation for the values. For evaluation, MSE scores are reported on the standardized data (for numerical columns) for consistency across all baselines and datasets. 
\subsection{Diffusion Backbones and Hyperparameters}
For all diffusion-based baselines (\textsc{Harpoon}, DiffPuter~\citep{zhang2025diffputer}, we adopt a shared MLP architecture as the backbone for consistency. The network consists of 5 linear layers with swish activations. For the timestep embeddings, we use a 2-layer MLP with 1024 channels for the positional embedding. We use one-hot encoding for the categorical features and we standardize all numerical features similar to the other baselines in Sec.~\ref{sec:nondiff}. Evaluation is also performed in a similar manner for consistency across baselines. 

The diffusion process is configured with $\alpha_1 = 0.9999$ and $\alpha_T = 0.98$, consistent with prior work~\citep{ho2020denoising}. We use $T=200$ diffusion steps for both noising and denoising using a batch size of 1024 trained for 1000 epochs. We use a learning rate of $10^{-4}$ and a guidance step size $\eta=0.2$ (\textsc{Harpoon} only) for all datasets and task configurations. When sampling using \textsc{harpoon}, we use the MAE loss for imputation tasks. For general inequality constraints (\eqref{eq:generalconstraints}), we use coefficients $\lambda_g, \lambda_h=1$, the 2-norm for $i,j$ and the 1-norm for $k$.

%% file: ICLR_2025_Template/appendix/appendix-algos.tex
\section{Training and sampling algorithms}
\subsection{Training} 
\label{app:training}
During training (see Algorithm~\ref{alg:training}), we adopt the standard diffusion objective. At each iteration, we draw a clean sample $x_0$ from the data distribution $p(x_0)$, a uniformly sampled timestep $t\sim U[1,T]$, and Gaussian noise $\epsilon \sim \mathcal{N}(0,I)$. We forward noise the sample using \eqref{eq:fwdnoising} and then predict the injected noise using the denoiser $\epsilon_\theta$. Parameters are updated by minimizing the mean squared error with the MSE loss, following the standard training protocol, as proposed in ~\cite{ho2020denoising}.
\begin{algorithm}[H]
\caption{Training of Unconditional Diffusion Model}
\label{alg:training}
\begin{algorithmic}[1]
\Repeat
    \State Sample data point $x_0 \sim p(x_0)$
    \State Sample timestep $t \sim \text{Uniform}(\{1, \ldots, T\})$
    \State Sample noise $\epsilon \sim \mathcal{N}(0, I)$
    \State Predict noise using backbone $\epsilon_\theta(\sqrt{\bar{\alpha}_t} x_0 + \sqrt{1-\bar{\alpha}_t}\,\epsilon, t)$
    \State Update parameters with gradient descent on 
    \[
        \nabla_\theta \big\| \epsilon - \epsilon_\theta(x_t,t) \big\|_2^2
    \]
\Until{convergence}
\end{algorithmic}
\end{algorithm}
\subsection{Sampling with \textsc{Harpoon}}
\label{app:samplingalgo}
At inference time, \textsc{Harpoon} augments the standard denoising procedure with manifold-aware conditional guidance (see Algorithm~\ref{alg:harpoon}). We initialize the sample with Gaussian noise $x_T \sim \mathcal{N}(0,I)$. At denoising each step, we predict an estimate $\hat{x}_0$ of the clean sample. A task-specific conditional loss $\mathcal{L}(\hat{x}_0, c)$ is then evaluated, and its gradient with respect to $x_t$ is computed. This gradient serves as an approximate tangential direction along the data manifold $\mathcal{M}_t$. Since the noisy manifolds at successive timesteps are approximately parallel, the gradient computed at timestep $t$ provides a good local approximation of the tangent space at $t-1$. \textsc{Harpoon} therefore reuses these gradients across steps to maintain manifold alignment. Concretely, after performing the standard denoising update to obtain a provisional sample $x_{t-1}'$, \textsc{Harpoon} applies an additional tangential correction:
\[
x_{t-1} = x_{t-1}' - \eta \, \nabla_{x_t}\mathcal{L}(x_t, c),
\]
where $\eta$ is a step size hyperparameter. This ensures that updates both (i) respect the manifold structure learned during training and (ii) steer the sample toward satisfying the imposed condition.

%% file: ICLR_2025_Template/appendix/appendix-experiments.tex
\section{Additional Experiments}
\label{app:additionalexp}
\subsection{Imputation results under MCAR and MNAR masks}
\label{app:mcarmnar}
Beyond the main results with MAR masks (Sec.~\ref{sec:imputation}), we further evaluate Harpoon and baseline methods under MCAR and MNAR missingness in Tables~\ref{tab:MCARmse},\ref{tab:MCARacc},\ref{tab:MNARmse}, and \ref{tab:MNARacc}.  
We notice that the overall trends remain consistent with the MAR setting. In terms of MSE and accuracy, DiffPuter and \textsc{Harpoon} generally perform the best, with \textsc{Harpoon} generally being the better among the two. While methods such as Remasker and Miracle are also strong baselines, we see that their bias towards numeric-type data limits their accuracy when imputing discrete features (see Tables~\ref{tab:MCARacc},\ref{tab:MNARacc}). 
\input{ICLR_2025_Template/tables/mcar}
\input{ICLR_2025_Template/tables/mcaracc}
\input{ICLR_2025_Template/tables/mnar}
\input{ICLR_2025_Template/tables/mnaracc}
\subsection{Ablation: Effect of Encoding}
In Tables~\ref{tab:encodingmse} and \ref{tab:encodingacc}, we compare different encoding schemes on imputation performance, measured by MSE (for continuous features) and accuracy (for discrete features). Specifically, we consider \emph{one-hot encoding} versus \emph{integer encoding}, where the latter assigns a unique integer to each category of a feature. Our results show that one-hot encoding consistently outperforms integer encoding, both in terms of MSE and discrete imputation accuracy. We attribute this to the fact that integer encoding implicitly imposes an artificial ordering among categories, which may bias the model towards predicting categories with numerically closer values, even if they are semantically unrelated. In contrast, one-hot encoding avoids this issue by representing categories in an unordered manner.

\input{ICLR_2025_Template/tables/ablationencoding}

\input{ICLR_2025_Template/tables/ablationencodingacc}

\subsection{{Choice of Preprocessing}}
{Table \ref{tab:preprocessor} shows the imputation performance under MAR missingness for three datasets when using either standard scaling (Z-normalization) or quantile standardization to preprocess continuous features\footnote{We use the default preprocessor from \url{https://scikit-learn.org/stable/modules/generated/sklearn.preprocessing.QuantileTransformer.html}}. Across all datasets and missingness ratios, we observe that standard scaling provides consistently lower MSE on continuous features compared to the quantile counterpart, sometimes by more than an order of magnitude. In contrast, categorical imputation accuracy remains largely unchanged, since categorical variables are not affected by feature scaling.}

{The large errors with quantile scaling are probably because of highly non-linear mappings that reshape features to match a uniform (or normal) distribution, which can produce large jumps in the data space even for small differences in the transformed space, especially for sparse or heavy-tailed regions of the distribution. Hence, we see that for Adult, where continuous features are relatively well-behaved, quantile scaling only slightly worsens performance. However, in the Default and Shoppers datasets, both of which contain skewed or heavy-tailed numerical variables, the quantile mapping results in large MSE errors.}

\input{ICLR_2025_Template/tables/ablation_preprocessing_scaler}

\subsection{{Runtime Costs}}
\label{sec:appruntime}
{\autoref{tab:runtimes} shows the runtime costs of methods on all datasets. The observations are consistent with what was reported in the main text. We see that \textsc{harpoon}'s still stays within 2x the runtime cost of DiffPuter due to the doubling of function calls to the denoiser backbone for gradient propagation.}
\input{ICLR_2025_Template/tables/runtimes}

\subsection{{Results with additional metrics}}
\label{sec:moremetrics}

\input{ICLR_2025_Template/tables/moremetrics}

{\autoref{tab:moremetrics} reports the KS, detectability, and identifiability scores of synthetic datasets generated under different constraints: Range, Category, Conjunctions, and Disjunctions. Overall, KS scores remain consistently high across all methods and constraints, indicating that the synthetic data effectively preserves the marginal distributions of the original datasets. }

{Detectability, which measures how easily a classifier can distinguish synthetic from real data, varies with the type of constraint. Under stricter constraints, such as conjunctions and categorical, detectability is generally lower, indicating that samples are more indistinguishable from the real data. This is expected, as enforcing categorical or conjunctive constraints restricts the model to a subset of discrete labels or consistent combinations, producing samples that more closely resemble real data. In contrast, Range constraints allow more flexibility, since values can vary continuously within plausible bounds, which can increase detectability. For \textsc{harpoon} in particular, detectability noticeably decreases under conjunctive and categorical constraints, suggesting that the model effectively guides samples toward regions satisfying both objectives.}

{Identifiability, which quantifies the risk of exposing real records, remains generally low across all methods. However, we see an interesting trade-off with the detectability. The more a sample resembles real data (lower detectability), the higher the potential risk of revealing private records (higher identifiability). Range constraints enforce numerical consistency, resulting in low identifiability but moderate detectability due to skewed distributions of generated outputs. On the other hand, category constraints preserve categorical labels, lowering detectability while increasing identifiability. Combining both constraints in the AND setting increases the identifiability of \textsc{harpoon}, but lowers the detectability. The OR setting provides a balance, producing realistic data with moderate privacy protection.}

%% file: ICLR_2025_Template/tables/mcar.tex
\begin{table}[ht]
\centering
\caption{Imputation MSE for MCAR mask. Standard deviation in subscript.}
\label{tab:MCARmse}
\resizebox{\textwidth}{!}{
\begin{tabular}{llllllllll}
\hline
Ratio & Method & Adult & Bean & California & Default & Gesture & Letter & Magic & Shoppers \\
\hline
\multirow{6}{*}{ 0.25} & GAIN & 1.77$_{0.04}$ & 1.27$_{0.02}$ & 13.75$_{0.10}$ & 5.67$_{0.03}$ & 1.16$_{0.02}$ & 1.03$_{0.01}$ & 1.46$_{0.02}$ & 1.82$_{0.03}$ \\
 & Miracle & 1.12$_{0.04}$ & \textbf{0.08}$_{0.00}$ & \textbf{0.38}$_{0.02}$ & 0.83$_{0.11}$ & 0.86$_{0.05}$ & \underline{0.45}$_{0.03}$ & \textbf{0.51}$_{0.03}$ & 1.27$_{0.10}$ \\
 & GReaT & 5.12$_{4.38}$ & 2.73$_{1.85}$ & 1.67$_{0.04}$ & 1.75$_{0.26}$ & >$10^6$ & 1.74$_{0.01}$ & 1.60$_{0.01}$ & 1.63$_{0.07}$ \\
 & Remasker & \textbf{0.94}$_{0.05}$ & 0.26$_{0.09}$ & 1.01$_{0.12}$ & \underline{0.51}$_{0.03}$ & 0.67$_{0.07}$ & 0.65$_{0.07}$ & 1.70$_{1.30}$ & 0.86$_{0.11}$ \\
 & DiffPuter & 1.05$_{0.05}$ & 0.16$_{0.01}$ & 0.49$_{0.02}$ & 0.53$_{0.03}$ & \textbf{0.34}$_{0.01}$ & \textbf{0.39}$_{0.01}$ & \underline{0.67}$_{0.02}$ & \underline{0.80}$_{0.05}$ \\
 & Harpoon & \underline{1.01}$_{0.06}$ & \underline{0.15}$_{0.01}$ & \underline{0.44}$_{0.01}$ & \textbf{0.45}$_{0.02}$ & \underline{0.47}$_{0.01}$ & 0.47$_{0.01}$ & 0.79$_{0.02}$ & \textbf{0.74}$_{0.03}$ \\
 \midrule
\multirow{6}{*}{ 0.50} & GAIN & 3.30$_{0.02}$ & 1.79$_{0.01}$ & 17.30$_{0.13}$ & 19.16$_{0.07}$ & 3.06$_{0.02}$ & 1.11$_{0.01}$ & 1.82$_{0.00}$ & 4.64$_{0.03}$ \\
 & Miracle & 4.00$_{0.56}$ & \textbf{0.16}$_{0.02}$ & 0.83$_{0.05}$ & 1.37$_{0.09}$ & 1.21$_{0.02}$ & 0.89$_{0.04}$ & 1.63$_{0.25}$ & 2.28$_{0.20}$ \\
 & GReaT & 2.50$_{1.38}$ & 4.09$_{3.74}$ & 1.55$_{0.04}$ & 2.30$_{1.47}$ & >$10^5$ & 1.55$_{0.02}$ & 1.40$_{0.02}$ & 1.58$_{0.34}$ \\
 & Remasker & \textbf{1.06}$_{0.02}$ & 1.02$_{0.19}$ & 1.40$_{0.02}$ & 0.86$_{0.05}$ & 1.91$_{0.44}$ & 1.12$_{0.05}$ & 4.37$_{0.02}$ & 1.23$_{0.04}$ \\
 & DiffPuter & 1.22$_{0.03}$ & 0.27$_{0.01}$ & \underline{0.77}$_{0.02}$ & \underline{0.64}$_{0.02}$ & \textbf{0.54}$_{0.01}$ & \textbf{0.70}$_{0.02}$ & \textbf{0.81}$_{0.00}$ & \underline{0.93}$_{0.03}$ \\
 & Harpoon & \underline{1.12}$_{0.02}$ & \underline{0.18}$_{0.01}$ & \textbf{0.70}$_{0.01}$ & \textbf{0.53}$_{0.02}$ & \underline{0.70}$_{0.02}$ & \underline{0.83}$_{0.02}$ & \underline{0.87}$_{0.01}$ & \textbf{0.87}$_{0.02}$ \\
 \midrule
\multirow{6}{*}{ 0.75} & GAIN & 6.63$_{0.01}$ & 2.83$_{0.01}$ & 21.66$_{0.08}$ & 58.09$_{0.22}$ & 9.59$_{0.05}$ & 1.29$_{0.00}$ & 2.36$_{0.01}$ & 13.07$_{0.05}$ \\
 & Miracle & 1.47$_{0.14}$ & \underline{0.53}$_{0.02}$ & \textbf{0.89}$_{0.06}$ & >$10^4$ & 1.23$_{0.04}$ & \textbf{0.86}$_{0.01}$ & \textbf{0.96}$_{0.05}$ & \underline{1.13}$_{0.05}$ \\
 & GReaT & 1.79$_{0.58}$ & 4.21$_{1.73}$ & 1.50$_{0.01}$ & 2.34$_{2.01}$ & >$10^5$ & 1.50$_{0.01}$ & 1.27$_{0.01}$ & 1.33$_{0.09}$ \\
 & Remasker & \textbf{1.07}$_{0.01}$ & 1.31$_{0.00}$ & 1.40$_{0.00}$ & 1.01$_{0.01}$ & 6.66$_{0.01}$ & \underline{1.11}$_{0.00}$ & 4.38$_{0.02}$ & 1.13$_{0.01}$ \\
 & DiffPuter & 1.47$_{0.02}$ & 0.63$_{0.02}$ & 1.19$_{0.01}$ & \underline{0.90}$_{0.01}$ & \textbf{0.82}$_{0.02}$ & 1.18$_{0.01}$ & 1.12$_{0.01}$ & 1.20$_{0.03}$ \\
 & Harpoon & \underline{1.28}$_{0.01}$ & \textbf{0.39}$_{0.00}$ & \underline{1.11}$_{0.01}$ & \textbf{0.72}$_{0.01}$ & \underline{0.97}$_{0.02}$ & 1.27$_{0.02}$ & \underline{1.09}$_{0.01}$ & \textbf{1.06}$_{0.02}$ \\
\hline
\end{tabular}}
\end{table}

%% file: ICLR_2025_Template/tables/mcaracc.tex
\begin{table}[ht]
\centering
\caption{Imputation Accuracy for MCAR mask. Standard deviation in subscript.}
\label{tab:MCARacc}
\begin{tabular}{lllll}
\hline
Ratio & Method & Adult & Default & Shoppers \\
\hline
\multirow{6}{*}{ 0.25} & GAIN & 20.77$_{0.21}$ & 47.77$_{0.43}$ & 45.43$_{0.43}$ \\
 & Miracle & 43.48$_{0.89}$ & 65.23$_{2.07}$ & 40.22$_{0.47}$ \\
 & GReaT & 15.90$_{0.77}$ & 41.74$_{0.27}$ & 40.51$_{0.46}$ \\
 & Remasker & 41.36$_{0.57}$ & 67.03$_{0.71}$ & 42.19$_{0.34}$ \\
 & DiffPuter & \underline{66.47}$_{0.23}$ & \underline{69.91}$_{0.26}$ & \textbf{59.11}$_{0.37}$ \\
 & Harpoon & \textbf{66.54}$_{0.19}$ & \textbf{72.22}$_{0.30}$ & \underline{54.67}$_{0.55}$ \\\midrule
\multirow{6}{*}{ 0.50} & GAIN & 19.94$_{0.15}$ & 41.14$_{0.12}$ & 32.45$_{0.20}$ \\
 & Miracle & 30.24$_{1.19}$ & 56.03$_{1.35}$ & 32.50$_{1.26}$ \\
 & GReaT & 31.08$_{0.87}$ & 49.99$_{0.44}$ & 39.79$_{0.38}$ \\
 & Remasker & 33.28$_{3.21}$ & 51.23$_{2.31}$ & 43.76$_{1.00}$ \\
 & DiffPuter & \underline{60.59}$_{0.21}$ & \underline{61.71}$_{0.46}$ & \textbf{55.90}$_{0.23}$ \\
 & Harpoon & \textbf{61.95}$_{0.21}$ & \textbf{68.25}$_{0.33}$ & \underline{52.26}$_{0.31}$ \\\midrule
\multirow{6}{*}{ 0.75} & GAIN & 21.22$_{0.06}$ & 19.99$_{0.18}$ & 19.27$_{0.13}$ \\
 & Miracle & 33.68$_{0.74}$ & 49.02$_{10.05}$ & 38.02$_{0.56}$ \\
 & GReaT & 42.62$_{0.25}$ & \underline{54.91}$_{0.33}$ & 38.17$_{0.33}$ \\
 & Remasker & 29.17$_{0.04}$ & 44.45$_{0.14}$ & 46.59$_{0.09}$ \\
 & DiffPuter & \underline{53.82}$_{0.17}$ & 49.64$_{0.34}$ & \textbf{51.77}$_{0.40}$ \\
 & Harpoon & \textbf{55.32}$_{0.06}$ & \textbf{60.55}$_{0.27}$ & \underline{49.65}$_{0.36}$ \\
\hline
\end{tabular}
\end{table}

%% file: ICLR_2025_Template/tables/mnar.tex
\begin{table}[ht]
\centering
\caption{Imputation MSE for MNAR mask. Standard deviation in subscript.}
\label{tab:MNARmse}
\resizebox{\textwidth}{!}{
\begin{tabular}{llllllllll}
\hline
Ratio & Method & adult & bean & california & default & gesture & letter & magic & shoppers \\
\hline
\multirow{6}{*}{ 0.25} & GAIN & 1.86$_{0.14}$ & 1.35$_{0.08}$ & 13.92$_{0.48}$ & 5.79$_{0.20}$ & 1.30$_{0.06}$ & 1.03$_{0.03}$ & 1.48$_{0.10}$ & 1.77$_{0.13}$ \\
 & Miracle & 1.33$_{0.16}$ & 0.27$_{0.15}$ & 0.55$_{0.20}$ & 0.95$_{0.15}$ & 1.04$_{0.07}$ & 0.54$_{0.05}$ & \textbf{0.65}$_{0.13}$ & 1.60$_{0.30}$ \\
 & GReaT & 2.83$_{1.58}$ & 1.96$_{0.15}$ & 1.76$_{0.11}$ & 1.99$_{0.20}$ & >$10^5$ & 1.75$_{0.03}$ & 1.67$_{0.08}$ & 5.26$_{7.18}$ \\
 & Remasker & \textbf{0.96}$_{0.09}$ & 0.27$_{0.09}$ & 1.18$_{0.23}$ & \underline{0.61}$_{0.11}$ & 0.77$_{0.07}$ & 0.69$_{0.11}$ & 1.09$_{0.09}$ & 0.90$_{0.13}$ \\
 & DiffPuter & 1.05$_{0.07}$ & \underline{0.19}$_{0.06}$ & \underline{0.51}$_{0.07}$ & 0.64$_{0.09}$ & \textbf{0.46}$_{0.07}$ & \textbf{0.39}$_{0.02}$ & \underline{0.73}$_{0.10}$ & \underline{0.84}$_{0.07}$ \\
 & Harpoon & \underline{1.02}$_{0.07}$ & \textbf{0.15}$_{0.03}$ & \textbf{0.46}$_{0.04}$ & \textbf{0.51}$_{0.10}$ & \underline{0.60}$_{0.11}$ & \underline{0.49}$_{0.02}$ & 0.88$_{0.15}$ & \textbf{0.78}$_{0.06}$ \\\midrule
\multirow{6}{*}{ 0.50} & GAIN & 3.32$_{0.08}$ & 1.76$_{0.05}$ & 17.40$_{0.34}$ & 18.82$_{0.14}$ & 3.08$_{0.05}$ & 1.09$_{0.04}$ & 1.80$_{0.07}$ & 4.60$_{0.18}$ \\
 & Miracle & 3.96$_{0.52}$ & 0.39$_{0.13}$ & 0.92$_{0.17}$ & 1.30$_{0.17}$ & 1.23$_{0.06}$ & 0.96$_{0.06}$ & 1.47$_{0.15}$ & 2.37$_{0.23}$ \\
 & GReaT & 2.93$_{1.77}$ & 2.52$_{0.75}$ & 2.00$_{0.35}$ & 1.73$_{0.39}$ & >$10^5$ & 1.55$_{0.02}$ & 1.40$_{0.05}$ & 1.41$_{0.06}$ \\
 & Remasker & \textbf{1.04}$_{0.07}$ & 1.12$_{0.22}$ & 1.42$_{0.09}$ & 0.79$_{0.08}$ & 2.14$_{0.30}$ & 1.14$_{0.04}$ & 4.35$_{0.15}$ & 1.18$_{0.03}$ \\
 & DiffPuter & 1.22$_{0.07}$ & \underline{0.30}$_{0.08}$ & \underline{0.76}$_{0.06}$ & \underline{0.66}$_{0.07}$ & \textbf{0.57}$_{0.06}$ & \textbf{0.69}$_{0.02}$ & \textbf{0.81}$_{0.06}$ & \underline{0.91}$_{0.04}$ \\
 & Harpoon & \underline{1.12}$_{0.07}$ & \textbf{0.19}$_{0.03}$ & \textbf{0.71}$_{0.05}$ & \textbf{0.54}$_{0.07}$ & \underline{0.74}$_{0.08}$ & \underline{0.82}$_{0.02}$ & \underline{0.85}$_{0.09}$ & \textbf{0.83}$_{0.01}$ \\\midrule
\multirow{6}{*}{ 0.75} & GAIN & 6.83$_{0.41}$ & 2.85$_{0.07}$ & 21.70$_{0.24}$ & 58.61$_{0.70}$ & 9.81$_{0.07}$ & 1.28$_{0.05}$ & 2.33$_{0.03}$ & 13.42$_{0.56}$ \\
 & Miracle & 1.71$_{0.23}$ & 0.80$_{0.05}$ & \textbf{0.95}$_{0.06}$ & 0.94$_{0.08}$ & 1.25$_{0.05}$ & \textbf{1.00}$_{0.03}$ & \underline{1.05}$_{0.06}$ & 1.17$_{0.06}$ \\
 & GReaT & 1.48$_{0.06}$ & 7.91$_{4.18}$ & 1.62$_{0.28}$ & 1.89$_{0.31}$ & >$10^5$ & 1.50$_{0.02}$ & 1.25$_{0.06}$ & 1.30$_{0.05}$ \\
 & Remasker & \textbf{1.05}$_{0.03}$ & 1.29$_{0.06}$ & 1.39$_{0.02}$ & 0.95$_{0.06}$ & 6.63$_{0.04}$ & \underline{1.10}$_{0.01}$ & 4.33$_{0.06}$ & \underline{1.13}$_{0.02}$ \\
 & DiffPuter & 1.47$_{0.06}$ & \underline{0.61}$_{0.09}$ & 1.18$_{0.02}$ & \underline{0.85}$_{0.08}$ & \textbf{0.79}$_{0.05}$ & 1.17$_{0.01}$ & 1.09$_{0.06}$ & 1.22$_{0.04}$ \\
 & Harpoon & \underline{1.29}$_{0.05}$ & \textbf{0.38}$_{0.05}$ & \underline{1.10}$_{0.03}$ & \textbf{0.67}$_{0.05}$ & \underline{0.91}$_{0.05}$ & 1.25$_{0.02}$ & \textbf{1.04}$_{0.09}$ & \textbf{1.07}$_{0.03}$ \\
\hline
\end{tabular}}
\end{table}

%% file: ICLR_2025_Template/tables/mnaracc.tex
\begin{table}[ht]
\centering
\caption{Imputation Accuracy for MNAR mask. Standard deviation in subscript.}
\label{tab:MNARacc}
\begin{tabular}{lllll}
\hline
Ratio & Method & adult & default & shoppers \\
\hline
\multirow{6}{*}{ 0.25} & GAIN & 20.62$_{0.29}$ & 49.72$_{2.10}$ & 44.56$_{0.94}$ \\
 & Miracle & 43.11$_{1.76}$ & 63.90$_{4.92}$ & 39.79$_{0.71}$ \\
 & GReaT & 16.49$_{0.96}$ & 42.28$_{0.62}$ & 39.43$_{0.80}$ \\
 & Remasker & 40.11$_{2.94}$ & 66.65$_{1.47}$ & 42.01$_{1.00}$ \\
 & DiffPuter & \textbf{67.02}$_{1.31}$ & \underline{70.50}$_{0.66}$ & \textbf{58.85}$_{0.80}$ \\
 & Harpoon & \underline{66.86}$_{1.29}$ & \textbf{72.58}$_{0.64}$ & \underline{53.94}$_{0.71}$ \\\midrule
\multirow{6}{*}{ 0.50} & GAIN & 19.89$_{0.12}$ & 42.03$_{1.43}$ & 32.06$_{0.56}$ \\
 & Miracle & 31.11$_{1.32}$ & 54.32$_{1.64}$ & 33.15$_{1.47}$ \\
 & GReaT & 31.66$_{0.90}$ & 50.26$_{0.47}$ & 38.94$_{0.36}$ \\
 & Remasker & 35.41$_{0.16}$ & 55.23$_{1.81}$ & 44.29$_{1.65}$ \\
 & DiffPuter & \underline{60.57}$_{0.84}$ & \underline{62.10}$_{0.65}$ & \textbf{55.61}$_{0.90}$ \\
 & Harpoon & \textbf{61.92}$_{0.85}$ & \textbf{68.48}$_{0.45}$ & \underline{51.85}$_{0.89}$ \\\midrule
\multirow{6}{*}{ 0.75} & GAIN & 20.94$_{0.44}$ & 20.22$_{1.01}$ & 19.13$_{0.56}$ \\
 & Miracle & 31.50$_{2.89}$ & 53.44$_{1.90}$ & 39.18$_{1.06}$ \\
 & GReaT & 42.76$_{0.83}$ & \underline{54.85}$_{0.36}$ & 38.35$_{0.38}$ \\
 & Remasker & 28.95$_{0.30}$ & 43.89$_{1.02}$ & 46.33$_{0.16}$ \\
 & DiffPuter & \underline{53.33}$_{0.63}$ & 49.29$_{0.44}$ & \textbf{51.56}$_{0.61}$ \\
 & Harpoon & \textbf{54.93}$_{0.79}$ & \textbf{60.40}$_{0.64}$ & \underline{49.53}$_{0.43}$ \\
\hline
\end{tabular}
\end{table}

%% file: ICLR_2025_Template/tables/ablationencoding.tex
\begin{table}[ht]
\centering
\caption{Imputation MSE (MAR mask) under different encodings for \textsc{harpoon}.}
\label{tab:encodingmse}
\begin{tabular}{lllll}
\hline
Ratio & Method & Adult & Default & Shoppers \\
\hline
\multirow{2}{*}{ 0.25} & One-Hot & \textbf{0.99}$_{0.08}$ & \textbf{0.51}$_{0.06}$ & \textbf{0.73}$_{0.06}$ \\
 & Integer & 1.61$_{0.10}$ & 0.67$_{0.10}$ & 1.05$_{0.14}$ \\\midrule
\multirow{2}{*}{ 0.50} & One-Hot & \textbf{1.08}$_{0.06}$ & \textbf{0.53}$_{0.06}$ & \textbf{0.81}$_{0.06}$ \\
 & Integer & 1.58$_{0.08}$ & 0.68$_{0.08}$ & 1.09$_{0.12}$ \\\midrule
\multirow{2}{*}{ 0.75} & One-Hot & \textbf{1.26}$_{0.07}$ & \textbf{0.72}$_{0.06}$ & \textbf{1.10}$_{0.05}$ \\
 & Integer & 1.54$_{0.05}$ & 0.82$_{0.10}$ & 1.26$_{0.04}$ \\
\hline
\end{tabular}
\end{table}

%% file: ICLR_2025_Template/tables/ablationencodingacc.tex
\begin{table}[ht]
\centering
\caption{Imputation Accuracy (MAR mask) under different encodings for \textsc{harpoon}.}
\label{tab:encodingacc}
\begin{tabular}{lllll}
\hline
Ratio & Method & Adult & Default & Shoppers \\
\hline
\multirow{2}{*}{ 0.25} & One-Hot & \textbf{69.44}$_{3.11}$ & \textbf{73.49}$_{2.53}$ & \textbf{51.17}$_{5.20}$ \\
 & Integer & 65.04$_{3.22}$ & 73.02$_{2.48}$ & 45.12$_{4.48}$ \\\midrule
\multirow{2}{*}{ 0.50} & One-Hot & \textbf{64.80}$_{3.29}$ & \textbf{69.55}$_{1.91}$ & \textbf{48.39}$_{5.00}$ \\
 & Integer & 59.82$_{2.74}$ & 68.86$_{1.64}$ & 43.16$_{4.80}$ \\\midrule
\multirow{2}{*}{ 0.75} & One-Hot & \textbf{58.81}$_{3.85}$ & 61.48$_{3.83}$ & \textbf{48.37}$_{4.39}$ \\
 & Integer & 55.21$_{2.54}$ & \textbf{62.76}$_{2.73}$ & 45.45$_{3.50}$ \\
\hline
\end{tabular}
\end{table}

%% file: ICLR_2025_Template/tables/ablation_preprocessing_scaler.tex
\begin{table}[t]
\centering
\caption{{Imputation MSE (continuous features) and Accuracy (categorical features) under MAR mask and various preprocessing scalers.}}
\label{tab:preprocessor}
\resizebox{\textwidth}{!}{%
\begin{tabular}{l l c c c c c c}
\toprule
 & & \multicolumn{2}{c}{Adult} & \multicolumn{2}{c}{Default} & \multicolumn{2}{c}{Shoppers} \\
Ratio & Method & Avg. MSE & Avg. Acc & Avg. MSE & Avg. Acc & Avg. MSE & Avg. Acc \\
\midrule
\multirow{2}{*}{ 0.25}
 &  Standard  & 0.99$_{0.08}$ & 69.44$_{3.11}$ & 0.51$_{0.06}$ & 73.49$_{2.53}$ & 0.73$_{0.06}$ & 51.17$_{5.20}$ \\
 &  Quantile  & 1.03$_{0.10}$ & 66.60$_{3.34}$ & 12.19$_{2.52}$ & 74.88$_{2.50}$ & 5.79$_{1.80}$ & 50.86$_{5.40}$ \\
\midrule
\multirow{2}{*}{ 0.50}
 &  Standard  & 1.08$_{0.06}$ & 64.80$_{3.29}$ & 0.53$_{0.06}$ & 69.55$_{1.91}$ & 0.81$_{0.06}$ & 48.39$_{5.00}$ \\
 &  Quantile  & 1.14$_{0.11}$ & 63.25$_{3.36}$ & 13.69$_{3.47}$ & 70.78$_{2.08}$ & 5.56$_{1.55}$ & 48.30$_{5.66}$ \\
\midrule
\multirow{2}{*}{ 0.75}
 &  Standard  & 1.26$_{0.07}$ & 58.81$_{3.85}$ & 0.72$_{0.06}$ & 61.48$_{3.83}$ & 1.10$_{0.05}$ & 48.37$_{4.39}$ \\
 &  Quantile  & 1.43$_{0.16}$ & 57.81$_{4.24}$ & 20.45$_{9.37}$ & 61.92$_{4.16}$ & 4.78$_{0.99}$ & 48.60$_{4.78}$ \\
\bottomrule
\end{tabular}
}
\end{table}

%% file: ICLR_2025_Template/tables/runtimes.tex
\begin{table}[htb]
\centering
\caption{{Runtimes (seconds) of methods across datasets (MAR mask 0.25 missing ratio).}}
\label{tab:runtimes}
\resizebox{\textwidth}{!}{%
\begin{tabular}{lcccccccc}
\toprule
Method & Adult & Shoppers & Default & Gesture & Magic & Bean & California & Letter \\
\midrule
GAIN &  0.14$_{ 0.26}$ &  0.14$_{ 0.26}$ &  0.15$_{ 0.28}$ &  0.14$_{ 0.27}$ &  0.14$_{ 0.27}$ &  0.14$_{ 0.27}$ &  0.15$_{ 0.29}$ &  0.14$_{ 0.27}$ \\
Miracle &  33.03$_{ 0.99}$ &  18.27$_{ 0.30}$ &  44.54$_{ 0.29}$ &  23.09$_{ 0.13}$ &  19.99$_{ 0.11}$ &  19.86$_{ 0.13}$ &  21.31$_{ 0.06}$ &  26.39$_{ 0.29}$ \\
GReaT &  65.47$_{ 2.57}$ &  51.13$_{ 2.29}$ &  202.71$_{ 7.44}$ &  131.72$_{ 1.96}$ &  19.84$_{ 0.82}$ &  75.29$_{ 4.32}$ &  25.05$_{ 1.30}$ &  103.68$_{ 0.56}$ \\
Remasker &  2.68$_{ 5.24}$ &  0.28$_{ 0.51}$ &  0.33$_{ 0.53}$ &  0.30$_{ 0.54}$ &  0.30$_{ 0.53}$ &  0.29$_{ 0.53}$ &  0.29$_{ 0.52}$ &  0.30$_{ 0.52}$ \\
DiffPuter &  2.83$_{ 0.27}$ &  1.19$_{ 0.25}$ &  2.62$_{ 0.26}$ &  0.98$_{ 0.27}$ &  1.68$_{ 0.26}$ &  1.32$_{ 0.26}$ &  1.86$_{ 0.26}$ &  1.79$_{ 0.25}$ \\
Harpoon &  4.43$_{ 0.34}$ &  1.89$_{ 0.36}$ &  4.05$_{ 0.34}$ &  1.54$_{ 0.35}$ &  2.65$_{ 0.34}$ &  2.08$_{ 0.34}$ &  2.88$_{ 0.34}$ &  2.80$_{ 0.33}$ \\
\bottomrule
\end{tabular}%
}

\end{table}

%% file: ICLR_2025_Template/tables/moremetrics.tex
\begin{table}
\centering
\caption{{KS, detectability and identifiability scores.}}
\label{tab:moremetrics}
\resizebox{\textwidth}{!}{%
\begin{tabular}{llccccccccc}
\toprule
Constr. & Method & \multicolumn{3}{c}{Adult} & \multicolumn{3}{c}{Default} & \multicolumn{3}{c}{Shoppers} \\
 &  & KS ($\uparrow$) \% & Det. ($\downarrow$) & Iden. ($\downarrow$) & KS ($\uparrow$) \% & Det. ($\downarrow$) & Iden. ($\downarrow$) & KS ($\uparrow$) \% & Det. ($\downarrow$) & Iden. ($\downarrow$) \\
\midrule
\multirow{3}{*}{\emph{Range}} & DiffPuter & 0.94$_{0.00}$ & 0.94$_{0.00}$ & 0.17$_{0.01}$ & 0.93$_{0.00}$ & 0.94$_{0.01}$ & 0.15$_{0.01}$ & 0.91$_{0.00}$ & 0.94$_{0.00}$ & 0.29$_{0.02}$ \\
 & GReaT & 0.96$_{0.00}$ & 0.95$_{0.00}$ & 0.20$_{0.01}$ & 0.92$_{0.00}$ & 0.99$_{0.00}$ & 0.08$_{0.01}$ & 0.91$_{0.00}$ & 0.99$_{0.00}$ & 0.07$_{0.01}$ \\
 & Harpoon & 0.95$_{0.00}$ & 0.90$_{0.00}$ & 0.26$_{0.00}$ & 0.94$_{0.00}$ & 0.89$_{0.00}$ & 0.31$_{0.01}$ & 0.92$_{0.00}$ & 0.87$_{0.00}$ & 0.44$_{0.01}$ \\
\midrule
\multirow{3}{*}{\emph{Category}} & DiffPuter & 0.95$_{0.00}$ & 0.85$_{0.01}$ & 0.41$_{0.01}$ & 0.94$_{0.00}$ & 0.75$_{0.01}$ & 0.44$_{0.01}$ & 0.89$_{0.00}$ & 0.89$_{0.00}$ & 0.41$_{0.02}$ \\
 & GReaT & 0.97$_{0.00}$ & 0.75$_{0.01}$ & 0.44$_{0.03}$ & 0.93$_{0.00}$ & 0.79$_{0.01}$ & 0.29$_{0.01}$ & 0.93$_{0.00}$ & 0.91$_{0.00}$ & 0.28$_{0.02}$ \\
 & Harpoon & 0.97$_{0.00}$ & 0.73$_{0.01}$ & 0.46$_{0.02}$ & 0.95$_{0.00}$ & 0.81$_{0.01}$ & 0.40$_{0.01}$ & 0.90$_{0.00}$ & 0.86$_{0.01}$ & 0.47$_{0.01}$ \\
\midrule
\multirow{3}{*}{\shortstack{\emph{Range}\\ \textsc{and} \\ \emph{Category}}} & DiffPuter & 0.94$_{0.00}$ & 0.91$_{0.01}$ & 0.30$_{0.05}$ & 0.93$_{0.01}$ & 0.95$_{0.01}$ & 0.15$_{0.01}$ & 0.88$_{0.00}$ & 0.94$_{0.01}$ & 0.27$_{0.02}$ \\
 & GReaT & 0.96$_{0.00}$ & 0.92$_{0.01}$ & 0.31$_{0.04}$ & 0.92$_{0.00}$ & 0.98$_{0.01}$ & 0.07$_{0.02}$ & 0.92$_{0.00}$ & 0.99$_{0.00}$ & 0.07$_{0.01}$ \\
 & Harpoon & 0.96$_{0.00}$ & 0.64$_{0.02}$ & 0.47$_{0.06}$ & 0.94$_{0.00}$ & 0.82$_{0.00}$ & 0.38$_{0.04}$ & 0.89$_{0.00}$ & 0.82$_{0.01}$ & 0.53$_{0.01}$ \\
\midrule
\multirow{3}{*}{\shortstack{\emph{Range}\\ \textsc{or} \\ \emph{Category}}} & DiffPuter & 0.94$_{0.00}$ & 0.92$_{0.00}$ & 0.22$_{0.00}$ & 0.94$_{0.00}$ & 0.80$_{0.00}$ & 0.37$_{0.00}$ & 0.90$_{0.00}$ & 0.93$_{0.00}$ & 0.30$_{0.01}$ \\
 & GReaT & 0.97$_{0.00}$ & 0.90$_{0.00}$ & 0.24$_{0.00}$ & 0.93$_{0.00}$ & 0.84$_{0.01}$ & 0.24$_{0.01}$ & 0.92$_{0.00}$ & 0.98$_{0.00}$ & 0.14$_{0.01}$ \\
 & Harpoon & 0.95$_{0.00}$ & 0.92$_{0.00}$ & 0.23$_{0.01}$ & 0.92$_{0.00}$ & 0.94$_{0.00}$ & 0.20$_{0.00}$ & 0.91$_{0.00}$ & 0.90$_{0.00}$ & 0.37$_{0.01}$ \\

\bottomrule
\end{tabular}}
\end{table}